\documentclass{article}

\usepackage{amsmath}
\usepackage[preprint]{neurips_2026}

\usepackage[utf8]{inputenc} % allow utf-8 input
\usepackage[T1]{fontenc}    % use 8-bit T1 fonts
\usepackage{hyperref}       % hyperlinks
\usepackage{url}            % simple URL typesetting
\usepackage{booktabs}       % professional-quality tables
\usepackage{amsfonts}       % blackboard math symbols
\usepackage{nicefrac}       % compact symbols for 1/2, etc.
\usepackage{microtype}      % microtypography
\usepackage{xcolor}         % colors
\usepackage{amsthm}

\newtheorem{proposition}{Proposition}

\usepackage{algorithm}
\usepackage{algpseudocode}
\usepackage{graphicx}
\theoremstyle{remark}

\usepackage{booktabs}
\usepackage{subcaption}
\usepackage{caption}
\usepackage{wrapfig}

\newif\ifshowresolved
\showresolvedtrue % or \showcommentsfalse
\ifshowresolved
    \newcommand{\resolved}[1]{\textcolor{gray}{[sd-resolved: #1]}}
    
    \newcommand{\erase}[1]{\textcolor{black}{\sout{#1}}}
\else
    \newcommand{\resolved}[1]{}
    \newcommand{\erase}[1]{}
    
\fi
\title{Mind the Residual Gap: \\Probabilistic Downscaling under Real-World Bias}

\author{%
  Yujin Kim \\
  Department of Computer Science\\
  Cornell University\\
  Ithaca, NY, 14853 \\
  \texttt{yk826@cornell.edu}
  \And
  Nidhi Soma\\
  Department of Computer Science \\
  Cornell University \\
  Ithaca, NY 14853\\
  \texttt{ns848@cornell.edu} \\
  \AND
  Sarah Dean \\
  Department of Computer Science \\
  Cornell University \\
  Ithaca, NY 14853\\
  \texttt{sdean@cornell.edu} \\
}

\begin{document}

\maketitle

\begin{abstract}
Probabilistic downscaling is the task of modeling the conditional distribution of high-resolution fields given coarse inputs, and is a central challenge to atmospheric science, climate modeling, and other multiscale physical systems. A widely used paradigm decomposes the problem into a deterministic mean predictor followed by a stochastic residual generator.
While effective in idealized settings, this mean--residual approach frequently produces biased and under-dispersive ensembles in real-world applications. Is this merely generic predictive uncertainty miscalibration? We show that the root cause is more fundamental: \textit{residual target misspecification}, the residual distribution induced during training differs systematically from the one required at test time due to downscaling bias.
To close this gap, we introduce \textbf{ReMatch} (Residual Distribution Matching). ReMatch aligns the training residual distribution toward the test-time regime via optimal transport in a low-dimensional PCA space. This preserves the statistical benefits of the mean--residual framework while reducing the train--test mismatch in the residual targets seen by the stochastic generator. On a controlled synthetic benchmark with varying bias levels and a real-world HRRR--ERA5 wind field downscaling task, ReMatch substantially reduces under-dispersion, improves calibration (SSR and CRPS), and outperforms strong baselines, including the standard mean--residual model and its variants, as well as state-of-the-art super-resolution models. Our code is available at \href{https://github.com/sdean-group/ReMatch.git}{https://github.com/sdean-group/ReMatch.git}
\end{abstract}

\section{Introduction}
Probabilistic downscaling seeks to model the conditional distribution of high-resolution fields given coarse-resolution inputs. This problem is a common challenge in atmospheric science, climate modeling, and other multiscale physical systems. Fine-scale dynamics are essential for downstream tasks such as hazard assessment, local forecasting, trajectory planning, and uncertainty-aware decision making, but directly simulating them is often computationally expensive.
Downscaling is inherently an ill-posed inverse problem due to unresolved subgrid variability, even in idealized settings where the low-resolution input is obtained by simple downsampling of the high-resolution field. In real-world applications, however, the challenge is substantially greater. Systematic biases and mismatches between coarse- and fine-resolution data sources—collectively referred to as \emph{downscaling bias}—introduce additional, often domain- or source-dependent discrepancies~\cite{maraun2018statistical}. These biases distort the low-to-high resolution relationship far beyond simple interpolation, rendering the required high-resolution correction both stochastic and systematically more difficult to infer.

A widely adopted strategy for such tasks is the two-stage mean--residual decomposition: a deterministic regressor first predicts a coarse approximation of the target, after which a stochastic generative model produces a residual correction that provides fine-scale detail and probabilistic spread. This design has been successfully applied across probabilistic downscaling~\cite{mardani2025residual,fotiadis2025adaptive,rampal2024enhancing}, climate forecasting~\cite{yu2024diffcast,nguyen2025raindiff}, speech synthesis~\cite{chen2022resgrad}, and medical imaging~\cite{zhangpixel}.
In atmospheric downscaling, CorrDiff~\cite{mardani2025residual} is a prominent representative: it trains a regression model on variables such as temperature, wind, and radar reflectivity, then employs a conditional diffusion model to generate stochastic residual corrections. However, CorrDiff and subsequent work consistently report that the resulting ensembles are under-dispersive, with ensemble spread significantly smaller than the ensemble mean error~\cite{mardani2025residual,fotiadis2025adaptive}.

In this paper, we argue that this under-dispersion is not merely a generic diffusion calibration. Rather, it stems from a fundamental issue that we term \emph{residual target misspecification}. In the mean--residual framework, the residual is not an intrinsic stochastic component of the high-resolution field; it is an \emph{induced} correction target defined relative to the upstream mean predictor \(\mu_\phi(x)\). When the low- and high-resolution fields exhibit significant downscaling bias (as is typical in real-world cross-source or cross-resolution settings), the mean predictor overfits training-specific biases 
and artifacts. As a result, the residual distribution observed during training becomes systematically different from the correction distribution required at test time. We formalize this train--test mismatch in the residual target itself and show---both theoretically and empirically---that its severity increases directly with the strength of low-resolution (LR) -- high-resolution (HR) bias.

To address this, we propose \textbf{ReMatch} (Residual distribution Matching). Using a calibration set as a proxy for the test regime, ReMatch explicitly aligns the training residual distribution with the test-time residual distribution via optimal transport~\cite{wan2023debias}. This preserves the statistical benefits of the mean--residual decomposition while ensuring that the stochastic residual generator is trained on residual targets whose distribution is more closely matched to those encountered at inference.

Our main contribution is threefold: (i) a rigorous explanation of why mean—residual pipelines are prone to under-dispersion in biased downscaling settings, (ii) ReMatch, a practical solution that benefits both accuracy and ensemble calibration, and (iii) a comprehensive empirical evaluation on a synthetic benchmark~\cite{chung2023turbulence} with systematically varied LR—HR bias and a real-world wind field downscaling task using HRRR~\cite{dowell2022high} and ERA5~\cite{hersbach2020era5}. Across these settings, ReMatch consistently outperforms strong baselines—including the standard mean—residual pipeline, its classifier-free guidance variants~\cite{ho2022classifier}, uncertainty-conditioned residual diffusion, and state-of-the-art deterministic super-resolution models~\cite{liang2021swinir,wen2022u}—confirming that residual target misspecification is the primary cause of under-dispersion in biased downscaling.

\section{Background and Setup}
\label{sec:background}
\subsection{Probabilistic Downscaling}

We consider probabilistic downscaling of high-resolution geophysical fields
\(
y \in \mathbb{R}^{c_{\mathrm{out}} \times H \times W}
\)
from low-resolution inputs
\(
x \in \mathbb{R}^{c_{\mathrm{in}} \times h \times w},
\)
where \(H>h\) and \(W>w\). Here, \(c_{\mathrm{in}}\) and \(c_{\mathrm{out}}\) denote the numbers of input and output channels, respectively, such as velocity, temperature or other physical variables, while \((h,w)\) and \((H,W)\) denote the low- and high-resolution spatial grid sizes. The task is to model the conditional distribution of \(y\) given \(x\). 

In many physical downscaling problems, this mapping is fundamentally non-deterministic.  A useful abstraction is to assume that the low-resolution observation is obtained from the high-resolution state through an effective degradation process
\(
x = D(y,s),
\)
where \(D\) may include not only resolution reduction, but also regridding, projection, source mismatch, simulator bias, and other systematic discrepancies, and \(s\) denotes latent source- or pipeline-dependent factors. For example, an ERA5-to-HRRR wind downscaling task combines differences in spatial resolution, numerical model formulation, data assimilation, pressure-level definition, and grid projection within the same effective degradation process \(D\). In idealized super-resolution benchmarks, \(D\) is often close to a simple downsampling operator, whereas in real-world settings it may reflect substantial bias and cross-source
mismatch. In either case, \(D\) is many-to-one, so the inverse problem is ill-posed. Hence the natural learning target is not a unique deterministic inverse, but the conditional distribution
\(
y \sim p(y \mid x).
\)
This probabilistic viewpoint is important not only because the inverse problem is many-to-one, but also because downstream applications often require uncertainty-aware predictions. In multiscale physical systems, unresolved fine-scale variability can affect forecasting, decision making, and motion planning of agents moving through the predicted field. Successful downscaling should therefore model both a central prediction and the spread of physically plausible
high-resolution realizations.

\subsection{Mean--Residual Decomposition}
\label{subsec:mean_res_decomp}
We study a class of pipelines in which an upstream mean predictor
\[
\mu_\phi(x) \in \mathbb{R}^{c_{\mathrm{out}} \times H \times W}
\]
first produces a deterministic approximation of the target, and a residual generator then models the remaining correction~\cite{mardani2025residual,lirescast,yu2024diffcast,nguyen2025raindiff,chen2022resgrad,li2025diffusion,lai2025rdit}. The target is decomposed as
\[
y = \mu_\phi(x) + r_\phi,
\qquad
r_\phi := y - \mu_\phi(x).
\]
A residual generator
\[
q_\theta(r \mid x, \mu_\phi(x))
\]
then produces stochastic corrections
\[
\hat r \sim q_\theta(r \mid x,\mu_\phi(x)),
\]
leading to $k$ number of ensemble predictions 
\[
\hat y^{(i)} = \mu_\phi(x) + \hat r^{(i)}, \qquad i=1,\dots,k.
\]

CorrDiff~\cite{mardani2025residual} is a prominent example of this design in atmospheric downscaling: it first predicts a deterministic high-resolution field and then applies a diffusion model to generate a residual correction. This decomposition is motivated by the idea that the residual distribution is statistically easier to model than the full target distribution, since the upstream mean predictor removes a substantial fraction of the large-scale variance. 

Crucially, once \(\mu_\phi\) is fixed, the residual stage is trained on a predictor-induced target rather than an intrinsic component of the data distribution.
% For a random input--target pair \((X,Y)\), this target is
% \[
% R_\phi := Y - \mu_\phi(X).
% \]
This distinction is central to our analysis: the residual target can change when the reliability of \(\mu_\phi\) changes across domains, even if the underlying downscaling task remains the same.

\subsection{Domains and Splits}
For training and test splits, let $(X,Y)\sim P_d$, $d\in\{\mathrm{tr},\mathrm{te}\}$, denote the joint input--target distribution.
Although the two splits are assumed to be i.i.d. samples from the same population, the fitted upstream predictor is not symmetric with respect to them: it is optimized on the training split and evaluated on held-out test samples. For a fixed \(\mu_\phi\), each split therefore induces its own empirical residual random variable
\[
R_d^\phi(X_d,Y_d) := Y_d - \mu_\phi(X_d),
\qquad (X_d,Y_d)\sim P_d.
\]
As a result, its errors, and therefore the induced residuals \(R_{\mathrm{tr}}^\phi\) and \(R_{\mathrm{te}}^\phi\), can have systematically different distributions. This train--test residual mismatch is the object we analyze in the next section.

\section{Residual Target Misspecification}
\label{sec:misspecification}
We formalize the induced shift in the residual distribution as \textit{residual target misspecification}. Unlike an intrinsic stochastic component of the high-resolution field, the residual target is defined dependent to an upstream mean predictor \(\mu_\phi\). In real-world downscaling, the effective degradation process may combine resolution reduction, regridding, source mismatch, and model bias, so \(\mu_\phi\) may learn a training-domain LR--HR correspondence rather than a globally valid one. When this learned relation changes in reliability or structure across test regimes, the induced residual target also shifts. We refer to this predictor-induced shift as residual target misspecification.

\subsection{Residual Distribution Shift Induced by Mean-Error Gap}
\label{subsec:mean_error_gap}

For each domain \(d \in \{\mathrm{tr}, \mathrm{te}\}\), define the expected squared residual magnitude
\[
M_d(\phi) := \mathbb{E}_{(X,Y)\sim P_d} \bigl[ \|Y - \mu_\phi(X)\|_2^2 \bigr]
= \mathbb{E}_{R \sim R^\phi_d} \bigl[ \|R\|_2^2 \bigr].
\]
This quantity measures the typical scale of the correction that the residual generator must model on domain \(d\).

\begin{proposition}[Residual magnitude gap implies residual distribution shift]
\label{prop:res_shift}
For a fixed mean predictor \(\mu_\phi\), the 2-Wasserstein distance between the induced residual distributions satisfy
\[
W_2(R^\phi_{\mathrm{tr}}, R^\phi_{\mathrm{te}})
\geq
\left|
\sqrt{M_{\mathrm{tr}}(\phi)}
-
\sqrt{M_{\mathrm{te}}(\phi)}
\right|.
\]
\end{proposition}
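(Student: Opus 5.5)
The plan is to reduce the claim to the reverse triangle inequality in $L^2$, applied under an arbitrary coupling of the two residual laws, and then optimize over couplings. We treat $R^\phi_{\mathrm{tr}}$ and $R^\phi_{\mathrm{te}}$ as probability measures on $\mathbb{R}^{c_{\mathrm{out}}\times H\times W}$ with the Euclidean (Frobenius) norm. By the definition of $M_d(\phi)$, each has finite second moment $\mathbb{E}\|R\|_2^2 = M_d(\phi)$. If either moment is infinite, the right-hand side is either undefined or the statement is vacuous, so we assume both are finite. In that case $W_2$ is well defined and finite.

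First, fix any coupling $\pi$ of $R^\phi_{\mathrm{tr}}$ and $R^\phi_{\mathrm{te}}$, and let $(U,V)\sim\pi$. The Minkowski inequality in $L^2(\pi)$ gives
\[
\sqrt{\mathbb{E}_\pi\|U\|_2^2} \le \sqrt{\mathbb{E}_\pi\|U-V\|_2^2} + \sqrt{\mathbb{E}_\pi\|V\|_2^2}.
\]
Because the marginals of $\pi$ are fixed, the two moment terms equal $\sqrt{M_{\mathrm{tr}}(\phi)}$ and $\sqrt{M_{\mathrm{te}}(\phi)}$ no matter which coupling is chosen. Rearranging, and repeating the argument with $U$ and $V$ swapped, yields
\[
\sqrt{\mathbb{E}_\pi\|U-V\|_2^2} \ge \bigl|\sqrt{M_{\mathrm{tr}}(\phi)}-\sqrt{M_{\mathrm{te}}(\phi)}\bigr|.
\]

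Second, the right-hand side does not depend on $\pi$. We therefore take the infimum over all couplings $\pi\in\Pi(R^\phi_{\mathrm{tr}},R^\phi_{\mathrm{te}})$. By definition, $W_2^2$ is exactly this infimum of $\mathbb{E}_\pi\|U-V\|_2^2$, and taking square roots gives the claim.

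An equivalent viewpoint, which could serve as a remark, is that $\sqrt{M_d(\phi)} = W_2(R^\phi_d,\delta_0)$. The statement is then simply the triangle inequality for the metric $W_2$, taken through the Dirac mass at zero.

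There is no substantial obstacle here. The only point needing care is measure-theoretic: the infimum should be taken over couplings of the two laws, not over the joint sampling of $(X_{\mathrm{tr}},Y_{\mathrm{tr}})$ and $(X_{\mathrm{te}},Y_{\mathrm{te}})$. We must also ensure the second moments are finite so that each square root is finite. Note that $\mu_\phi$ being fixed is what makes $R^\phi_d$ a well-defined pushforward of $P_d$.
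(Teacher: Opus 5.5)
Your proof is correct and is essentially the paper's argument. The paper observes $W_2(R^\phi_d,\delta_0)=\sqrt{M_d(\phi)}$ and invokes the reverse triangle inequality for $W_2$, which is your closing remark; your coupling-plus-Minkowski argument is that same triangle inequality unpacked, with the gluing step trivial because the intermediate measure is a Dirac mass.

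One small quibble concerns your finite-moment caveat. If exactly one of $M_{\mathrm{tr}}(\phi), M_{\mathrm{te}}(\phi)$ is infinite, the claim is not vacuous: it asserts $W_2=\infty$. That still holds, since $\|U\|_2^2\le 2\|U-V\|_2^2+2\|V\|_2^2$ under any coupling, and symmetrically with $U$ and $V$ swapped.
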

\begin{proof}
For any domain \(d\), \(W_2(R^\phi_d, \delta_0) = \sqrt{M_d(\phi)}\) where \(\delta_0\) is the Dirac measure at the origin. The claim then follows from the reverse triangle inequality for \(W_2\).
\end{proof}
Although the lower bound in Proposition 1 is not always tight, it establishes a simple measurable certificate of residual distribution shift: a residual-energy gap necessarily implies a distributional gap. 

\subsection{Why Misspecification is Amplified in Residual Space}
Even under an i.i.d. train--test split, \(\mu_\phi\) is optimized on the training samples and therefore tends to achieve lower empirical residual energy on the training split than on held-out samples. In practice, the most common direction is
\(
M_{\mathrm{tr}}(\phi) < M_{\mathrm{te}}(\phi).
\)
In biased downscaling settings, this gap can easily be amplified, leaving small residuals during training but requiring larger corrections at test time. 
Residualization can also make small mismatches in the GT and mean prediction more visible. To see this, suppose the train and test residuals are
\(
R_{\mathrm{tr}}^\phi,
R_{\mathrm{te}}^\phi.
\)
Then their difference can be decomposed as
\(
R_{\mathrm{te}}^\phi - R_{\mathrm{tr}}^\phi
=
\left(Y_{\mathrm{te}} - Y_{\mathrm{tr}}\right)
-
\left(\mu_\phi(X_{\mathrm{te}}) - \mu_\phi(X_{\mathrm{tr}})\right).
\)
Thus, residual-space mismatch depends not only on the shift in the target field, but also on how the mean predictor shifts across domains. 
\subsection{Empirical Characterization}
\label{subsec:empirical_misspec}
We empirically verify that residual target misspecification is concentrated in the residual space. 
Using the mean--residual model~\cite{mardani2025residual} trained on the real-world wind-field downscaling benchmark described in Section~\ref{dataset:windfield}, we compare the train and test distributions of the mean predictor output and the corresponding residual target. For each field sample, we compute its samplewise RMS magnitude, specifically, the normalized \(\ell_2\) norm over all channels and spatial locations. 
We report the mean and standard deviation of these samplewise magnitude distributions, followed by train--test discrepancy metrics: \(W_2\), normalized \(W_2\)\footnote{Because the residual and mean predictor output have vastly different scales, raw Wasserstein distances can be misleading. We therefore additionally report a \emph{normalized Wasserstein distance}, defined as \(W_2\) divided by the average RMS scale of the two distributions.}, Jensen--Shannon (JS) divergence, and Kolmogorov--Smirnov (KS) statistic.
\begin{table}[t]
\centering
\caption{Train--test distribution shift comparison between the predictor-defined residual distribution and the upstream mean predictor output. Mean and standard deviation are computed from the samplewise RMS magnitude over all channels and spatial locations.
}
\label{tab:dist_shift}
\begin{tabular}{l|cc|cc|cccc}
\toprule
Variable & Mean$_{\mathrm{tr}}$ & Mean$_{\mathrm{te}}$ & Std$_{\mathrm{tr}}$ & Std$_{\mathrm{te}}$
& \(W_2\) & Norm.\ \(W_2\) & JS & KS \\
\midrule
\(R^\phi(X,Y)\) & 0.68 & 1.93 & 0.10 & 0.45 & 1.25 & \textbf{0.94} & \textbf{0.99} & \textbf{0.99} \\
\(\mu_\phi(X)\) & 15.40 & 15.37 & 7.54 & 7.09 & 0.39 & 0.02 & 0.02 & 0.03 \\
\bottomrule
\end{tabular}
\end{table}

Table~\ref{tab:dist_shift} shows that the train--test shift is much more pronounced in the residual space than in the mean-prediction space. The residual distribution has large discrepancies in normalized \(W_2\), JS divergence, and KS statistic, whereas the mean predictor output changes only mildly across domains. This supports our claim that the predictor-induced residual target can shift sharply even when the upstream mean output remains relatively stable.

\section{ReMatch: Residual Distribution Matching}
\label{sec:method}
We introduce Residual Distribution Matching (ReMatch), a probabilistic downscaling framework that combines deterministic mean prediction, residual target distribution alignment, and conditional diffusion-based residual generation. Its core idea is to preserve the useful mean prior provided by a deterministic high-resolution mean predictor while explicitly adapting the training residual targets toward a regime that better matches the correction statistics required at test time.

ReMatch uses a held-out calibration split as an observable proxy for the test-time target residual distribution.
Accordingly, we extend the domain notation to $d\in\{\mathrm{tr},\mathrm{cal},\mathrm{te}\},$
which denote the training, calibration, and test splits, respectively.
The training split is used to fit the upstream mean predictor, and the calibration split is used for residual distribution matching. The residual generator is trained on both training and calibration split, while the test split is reserved for final evaluation.

% The training pipeline consists of three stages. First, we train a regression model on the training domain to obtain a deterministic high-resolution mean field. Second, we transport the resulting training residuals toward the calibration residual distribution (used as a proxy for the target regime). Third, we train a conditional diffusion model on the union of the adapted training pairs and the original calibration pairs.
\subsection{Stage 1: High-Resolution Mean Field Estimation}
\label{label:regression_mean}

Let \((x, y) \sim P_{\mathrm{tr}}\) denote a training input--target pair. We train an upstream mean predictor using the pointwise regression objective
\[ \mu_\phi(x) \in \mathbb{R}^{c_{\mathrm{out}} \times H \times W}, \quad
\mathcal{L}_{\mathrm{reg}}(\phi) =
\mathbb{E}_{(x,y)\sim P_{\mathrm{tr}}} \bigl[ \| y - \mu_\phi(x) \|_p^p \bigr].
\]
% In our implementation, \(\mu_\phi\) is parameterized by a U-Net backbone~\cite{song2020score}.
This stage produces a structured deterministic approximation of the target field that serves as a strong prior for subsequent stochastic refinement. For each training sample, it induces a residual
\[
r_\phi(x,y) = y - \mu_\phi(x).
\]
\subsection{Stage 2: Residual Target Alignment via Transport}
\label{label:residual_transport}
% \yk{Should we mention optimal transport earlier?}\sd{I don't think it would make sense in the previous section, and we did mention it at a high level in the intro, so I think this is fine}\\
We address the domain shift \(R_{\mathrm{tr}}^\phi \neq R_{\mathrm{te}}^\phi\) by transporting the training residual distribution toward the calibration residual distribution, which serves as a proxy for the test-time correction regime. 
This is naturally formulated using optimal transport, which provides a framework for distribution matching by transporting mass from a source distribution to a target distribution~\cite{villani2009optimal,peyre2019computational}.

At a high level, ReMatch replaces each training residual with an OT-weighted combination of calibration residuals. The transport cost combines residual similarity with low-resolution conditioning consistency, preventing residuals from being transported to calibration samples with incompatible LR inputs. This shifts the training residual distribution toward the calibration regime while preserving compatibility among the LR input, pseudo-mean, and transported residual.
However, applying optimal transport directly in pixel space is computationally expensive and statistically unstable in high-dimensional residual fields~\cite{choi2023generative,balaji2020robust}.
% Following common practice in generative modeling in latent space, we perform transport in a low-dimensional PCA representation.
To make transport over high-dimensional residual fields tractable and less sensitive to grid-scale noise, we compute the transport cost in compact PCA representations~\cite{agustsson2017optimal,yuan2024optimal,liu2018latent}.
We fit separate PCA bases to the concatenated training and calibration residuals and to the corresponding low-resolution inputs. 
Let \(z_i^{\mathrm{tr}}, z_j^{\mathrm{cal}}\) denote residual PCA representations, and let \(\ell_i^{\mathrm{tr}}, \ell_j^{\mathrm{cal}}\) denote LR-conditioning PCA representations.
We define transport cost 
\[
c_{ij}=\alpha\|z_i^{\mathrm{tr}}-z_j^{\mathrm{cal}}\|_2^2+
\lambda_{\mathrm{cond}}\|\ell_i^{\mathrm{tr}}-\ell_j^{\mathrm{cal}}\|_2^2 .
\]
Here, \(c_{ij}\) is the cost of matching training sample \(i\) to calibration sample \(j\), with \(\alpha\) and \(\lambda_{\mathrm{cond}}\) controlling the residual and LR-conditioning terms.
The low-resolution conditioning term in the cost function is crucial, as it constrains transport to occur between samples with similar large-scale inputs, thereby reducing the risk of generating incompatible pseudo-means.
We solve an optimal transport problem using the cost \(c_{ij}\) and obtain a transport plan \(\pi\). 
The transported residual is then reconstructed as
\[
\tilde z_i=\frac{\sum_j\pi_{ij} z_j^{\mathrm{cal}}}{\sum_j\pi_{ij}},\qquad
\tilde r_i=U_K\tilde z_i+\bar r,\qquad
\tilde\mu_i=y_i-\tilde r_i,
\]
where \(\pi_{ij}\) is the transport weight from training sample \(i\) to calibration sample \(j\). 
Additional implementation details are provided in Appendix~\ref{appendix:optimal_transport}. 
% \sd{I'm not sure the cited papers are relevant. they look to me like alternatives to PCA using OT, not approaches which perform a standard OT on a reduced dimensional space of the data.} 
% \sd{random question: why not transport the mean predictions instead of the residuals? somehow its symmetric so either one could work, right?} \yk{I believe they are not identical. ot map depends on which space we compute the cost in. If we do it in residual space, the matching is based on the residual target so it naturally takes account to both GT y and mean. If we do it in mean space, the matching only sees the deterministic predictor output. I think regression could be transported bigger on residual ot map calculation.}\sd{makes sense!}
% \sd{maybe make this point both here and earlier when you define the cost?}

\subsection{Stage 3: Residual Generation with Diffusion}
\label{label:diffusion_residual}

In the final stage, we train a conditional diffusion model \(q_\theta(r \mid x, \mu)\) to generate stochastic residual corrections. The model is trained on the union of (i) the transported training samples \(\{(x_i, \tilde{\mu}_i, \tilde{r}_i)\}_{i=1}^{n_{\mathrm{tr}}}\) and (ii) the original calibration samples \(\{(x_j, \mu_\phi(x_j), r_j)\}_{j=1}^{n_{\mathrm{cal}}}\}\).

% We adopt EDM-style diffusion preconditioning and sampling~\cite{karras2022elucidating} with a U-Net-based score network architecture~\cite{song2020score}. 
% The low-resolution input \(x\) and mean field \(\mu\) (either \(\tilde{\mu}_i\) or \(\mu_\phi(x_j)\)) are concatenated channel-wise to form the conditioning input. 
% On the transported training samples, the diffusion model learns from the adapted residual targets \(\tilde{r}_i\) instead of the original \(r_i^\phi\).
% \sd{this is basically what Cordiff does? will this change when we swap in the better performing mean predictor?}
This design preserves the computational advantages of the mean--residual decomposition while explicitly correcting the train--test mismatch in the residual distribution. 
% As demonstrated in our experiments, the resulting model substantially reduces residual target misspecification and improves ensemble calibration without sacrificing the strong prior provided by the upstream mean predictor.
Algorithm~\ref{sec:alg} provides an overview of the three-stage ReMatch training pipeline. 

\textbf{Implementation Note.} ReMatch is modular: the mean predictor and stochastic residual generator can be instantiated with different architectures; implementation details are for are provided in Appendix~\ref{app:experiment_details}.

% ReMatch is not tied to a specific architecture. It only assumes a deterministic mean predictor followed by a stochastic residual generator, so the mean predictor can be instantiated with any suitable pointwise estimator and the residual stage with any stochastic or ensemble-based generative model.
% In our experiments, we use U-Net regression and SwinIR~\cite{liang2021swinir} as deterministic mean predictors, trained with \(\ell_2\) and \(\ell_1\) losses, respectively. For stochastic residual prediction, we use EDM-style diffusion preconditioning and sampling~\cite{karras2022elucidating} with a U-Net-based score network~\cite{song2020score}. The low-resolution input \(x\) and mean field \(\mu\), either \(\tilde{\mu}_i\) or \(\mu_\phi(x_j)\), are concatenated channel-wise as the conditioning input. Other stochastic residual generators can be used in the same framework.

\section{Experiments and Results}
\label{sec:experiments}

We evaluate ReMatch on a controlled synthetic benchmark and a real-world atmospheric downscaling task.
The synthetic experiment validates the mechanism of bias-induced residual target misspecification by systematically varying the LR--HR bias level. The real-world experiment tests ReMatch under realistic ERA5-to-HRRR wind-field distribution shifts. 
We evaluate downscaling fidelity and probabilistic calibration, and further assess downstream impact through a particle-advection experiment, where ensemble trajectories reveal how calibration affects predicted motion through the flow field.

\textbf{Baselines and Variants.}
\label{subsec:baselines} We use two versions of ReMatch: ReMatch\textsubscript{U} with a UNet mean predictor and ReMatch\textsubscript{S} with a SwinIR~\cite{liang2021swinir} mean predictor.
We compare ReMatch against deterministic baselines, including UNet~\cite{song2020score}, Conv-FNO~\cite{wen2022u}, and SwinIR~\cite{liang2021swinir}, and probabilistic CorrDiff-based baselines. The latter include the original CorrDiff~\cite{mardani2025residual}, a reduced-capacity mean-predictor variant (CorrDiff\textsubscript{m}), classifier-free guidance on residual diffusion (CFG~\cite{ho2022classifier}), and uncertainty-conditioned variants (\(\mathrm{UC}_{q}\) and \(\mathrm{UC}_{\mu}\)).

Except for SwinLR and Conv-FNO, all regression models are UNet-based~\cite{song2020score}, and all diffusion models use an EDM-style residual diffusion formulation~\cite{karras2022elucidating} with a UNet-based score network. We implement conditioning by channel-wise concatenation. 
Details of each baseline and implementation are provided in Appendix~\ref{appendix:baselines}.

\textbf{Evaluation Metrics.}
\label{subsec:metrics}
We report RMSE, Spread--Skill Ratio (SSR), and Continuous Ranked Probability Score (CRPS) as primary metrics. RMSE measures reconstruction accuracy, while SSR and CRPS evaluate probabilistic quality; SSR is particularly useful for diagnosing ensemble under-dispersion. We report LPIPS~\cite{zhang2018unreasonable} as an auxiliary perceptual metric, following prior use in atmospheric-field prediction~\cite{yu2024diffcast, hokson2026predicting}. We also report ACC~\cite{nai2025boosting}, computed between predicted and ground-truth (GT) deviations from the train set GT mean for each channel and time.

Metrics in Table~\ref{tab:main_results} are computed using 600 random test samples. Visual metrics are normalized channelwise with quantile extrema, and stochastic-model LPIPS is averaged over ensembles.
%network architecture

\subsection{Bias-Controlled Synthetic Benchmark}
\label{subsec:toy}
To validate our theoretical analysis, we conduct a controlled experiment on the BLASTNet dataset~\cite{chung2023turbulence}. We artificially inject structured downscaling bias of increasing severity into the low-resolution inputs while keeping the high-resolution targets fixed (details and bias generation procedure in Appendix~\ref{appendix:blastnet_details}). We use CorrDiff as a baseline method and ReMatch\textsubscript{U} as our proposed method. In all cases, 12 ensemble predictions are generated to calculate spreads and RMSE. 

\textbf{Dataset.}
We use the BLASTNet Momentum128 3D super-resolution dataset~\cite{chung2023turbulence} as a controlled testbed. The original dataset provides clean paired low- and high-resolution fields with 4 channels. We use the \(8\times\) super-resolution setting, with \(16 \times 16\) low-resolution inputs and \(128 \times 128\) high-resolution targets.
We define four bias levels (Level 0 to 3), where Level 0 uses the original clean interpolated low-resolution field and Levels 1--3 introduce progressively stronger structured mismatch by mixing 2D fields along the $z$ axis with Gaussian weights (full procedure in Appendix~\ref{appendix:blastnet_details}). Figure~\ref{fig:blastnet_level_comparison} illustrates how the input changes across the four bias levels under same HR target.

\subsubsection{Results}

\textbf{Does residual misspecification explain under-dispersion?}
Figure~\ref{fig:blastnet_rmse_ssr} shows the RMSE--SSR scatter across increasing LR--HR bias levels. As the bias level increases, the baseline exhibits a clear degradation: RMSE increases while SSR decreases. This trend indicates that stronger LR--HR mismatch induces a more difficult residual correction regime and leads to increasingly under-dispersive ensembles.

\textbf{Does ReMatch mitigate residual target misspecification?}
ReMatch improves RMSE in all biased settings, with the exception of Level 0, where the LR input is not biased. It also consistently improves SSR, and the improvement becomes more pronounced at higher bias levels. These results support our central hypothesis: when paired low- and high-resolution data contain structured bias, the mean--residual decomposition can induce residual target misspecification, and residual distribution matching can mitigate this effect.
Channel-wise RMSE and SSR results are provided in Appendix~\ref{appendix:blastnet_details}.
\begin{center}
\begin{minipage}[t]{0.52\linewidth}
    \centering
    \includegraphics[width=\linewidth]{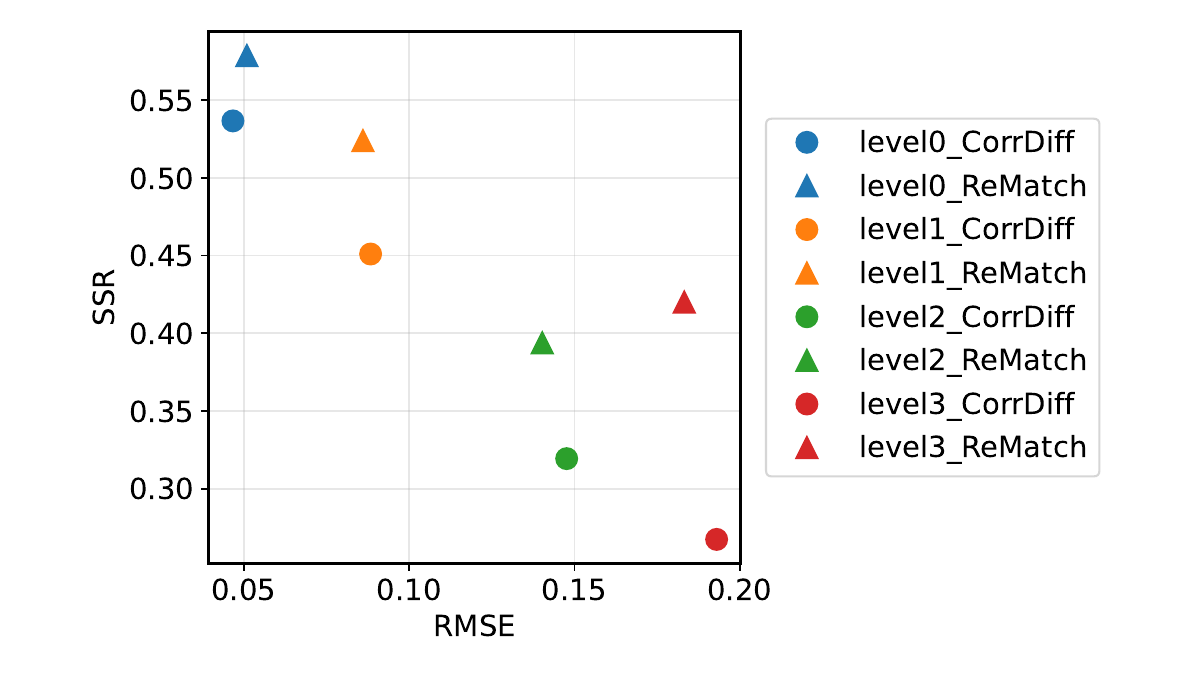}
    \captionof{figure}{\textbf{Performance under increasing LR input bias.}
    ReMatch consistently improves SSR and reduces RMSE in biased settings, with the performance gap becoming more pronounced as the bias level increases.(SSR$\rightarrow1$, RMSE$\downarrow$)}
    \label{fig:blastnet_rmse_ssr}
\end{minipage}
\hfill
\begin{minipage}[t]{0.43\linewidth}
    \centering
    \includegraphics[width=0.9\linewidth]{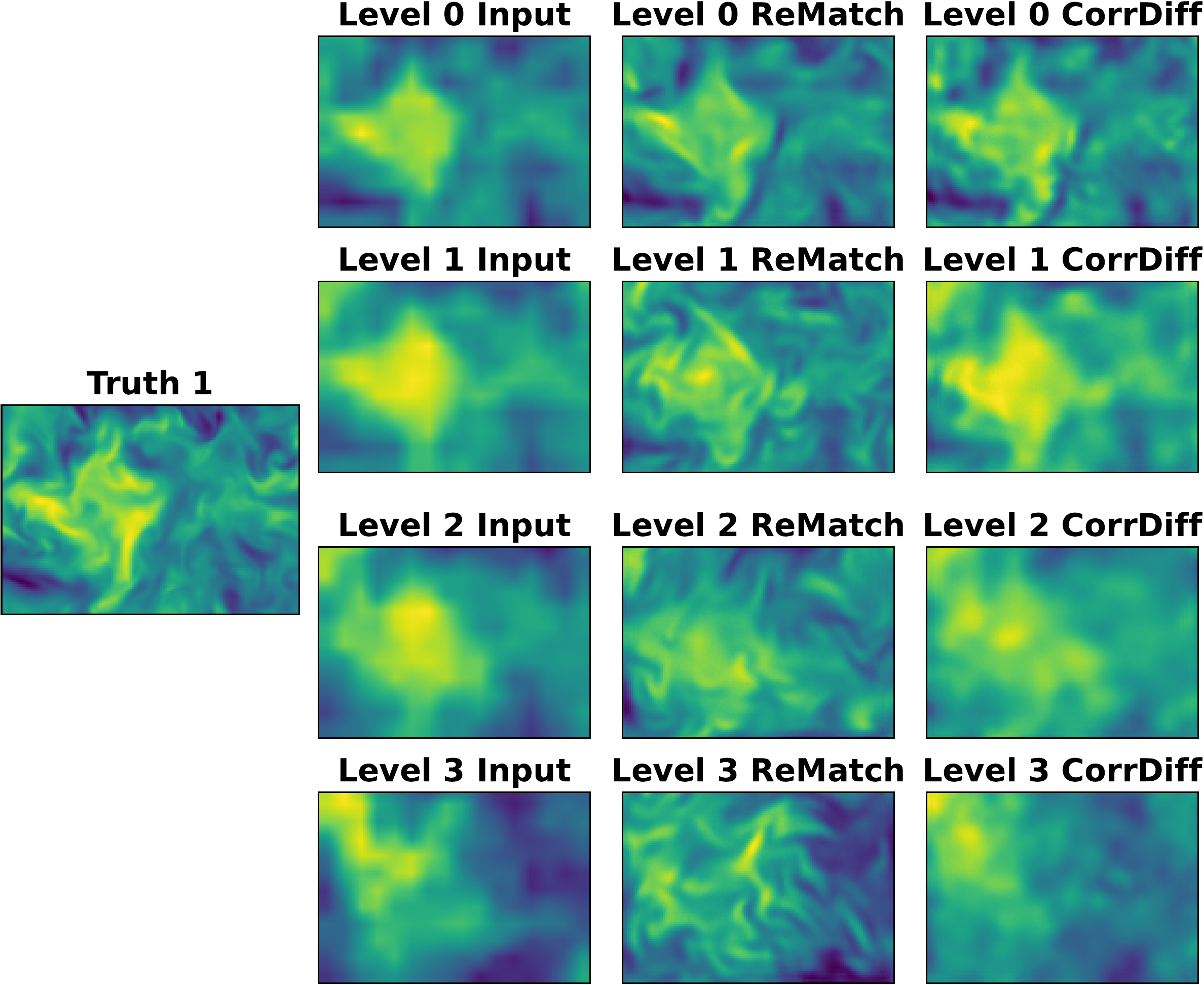}
    \captionof{figure}{\textbf{Qualitative comparison across LR input bias levels.} Rows correspond to LR bias levels. Columns show the GT HR field, LR input, ReMatch and CorrDiff prediction.}
    \label{fig:blastnet_level_comparison}
\end{minipage}
\end{center}
\subsection{Real-World Wind Field Downscaling Benchmark}
\label{subsec:main_results}
Our main experiment evaluates ReMatch on a real-world atmospheric downscaling task with realistic distribution shift. 
Deterministic baselines (UNet, ConvFNO, and SwinIR) produce a single prediction, whereas stochastic methods (CDM, CorrDiff\textsubscript{m}, CorrDiff, CFG, UC\textsubscript{\(\mu\)}, UC\textsubscript{q}, ReMatch\textsubscript{U}, and ReMatch\textsubscript{S}) generate 12 ensemble members. Further details on each model are provided in Appendix~\ref{app:experiment_details}.

\textbf{Dataset.}
\label{dataset:windfield}
We use ERA5~\cite{hersbach2020era5} as the coarse-resolution input and HRRR~\cite{dowell2022high} as the high-resolution target over the northeast CONUS region. 
The input contains \(u\)- and \(v\)-wind fields at five stratospheric pressure levels, giving 10 channels in total and 4 temporal channels for sinusoidal embeddings of sample day and month on a \(21 \times 21\) grid. The target consists of the same 10 wind-component channels on a \(168 \times 168\) grid. The dataset contains hourly fields from 2018 to 2021. We use 2018--2020 for training and 2021 for testing; for methods requiring calibration data, we hold out 2020 as a calibration set. Further details are in Appendix ~\ref{appendix:hrrr_era5_details}. 
\subsubsection{Results}
\begin{figure}[b]
    \centering
    \includegraphics[width=1.\linewidth]{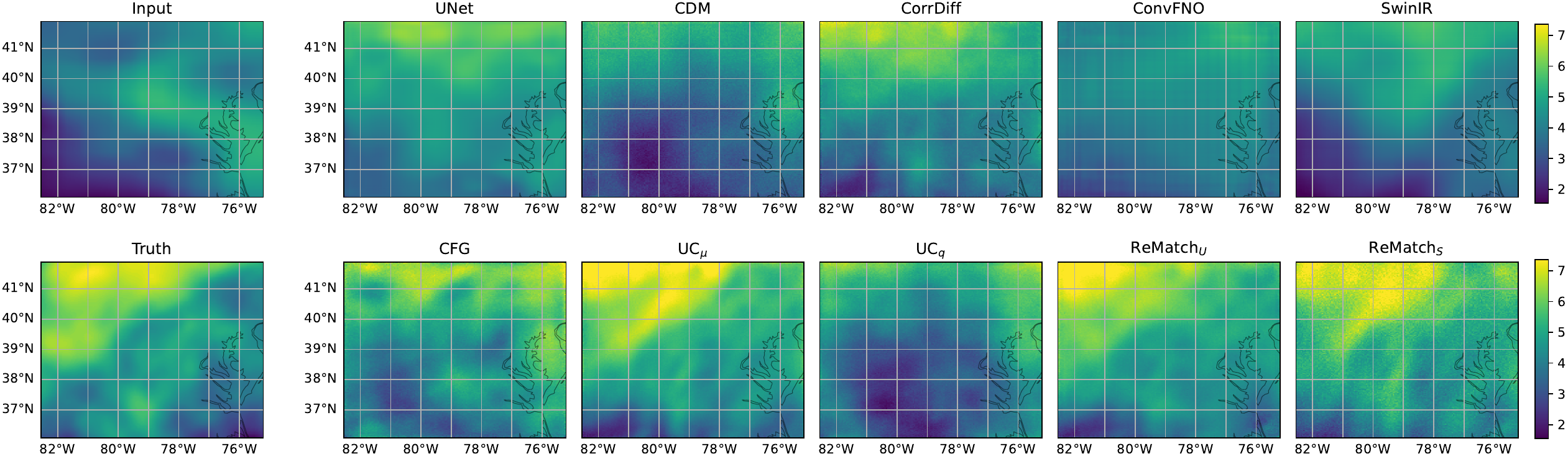}
    \caption{\textbf{Qualitative comparison of downscaled 50 hPa \(v\)-wind fields on 2021-06-05 10:00:00.} The top-left panel shows the upsampled ERA5 input, and the bottom-left panel shows the corresponding HRRR GT. Deterministic baselines produce a single prediction, whereas diffusion-based methods generate 12 ensemble members; here we show the 7th ensemble member for each stochastic method.}
    \label{fig:compare_all}
\end{figure}
\textbf{What does realistic LR--HR mismatch look like?}
The first column of Fig.~\ref{fig:compare_all} illustrates LR--HR mismatch: even for the same atmospheric scene, the LR input and the corresponding HR target differ substantially. In the appendix, Fig.~\ref{fig:era_hrrr_gap} shows more examples where RMSE can quantify this mismatch, and Table~\ref{tab:era_hrrr_channelwise_stats} shows that the amount of bias realistically depends on channel. The cross-source mismatch makes real-world downscaling more challenging than idealized super-resolution. 

\begin{table}[t]
\centering
\caption{\textbf{Quantitative comparison on HRRR-ERA5 pressure-level super-resolution over all channels.}}
\label{tab:main_results}
\resizebox{\linewidth}{!}{
\begin{tabular}{l|cccccc|ccc|cc}
\toprule
Metric & UNet & CDM & CorrDiff\textsubscript{m} & CorrDiff & ConvFNO & SwinIR 
& CFG & UC\textsubscript{$\mu$} & UC\textsubscript{q}
& \textbf{ReMatch\textsubscript{U}} & \textbf{ReMatch\textsubscript{S}} \\
\midrule
% RMSE $\downarrow$ 
% & 1.85 & 1.84 & 1.85 & 1.84 & 2.34 & \underline{1.76}
% & 1.87 & 1.84 & 1.83
% & 1.79 & \textbf{1.73} \\
{RMSE} $\downarrow$ 
& 1.8517 & 1.8570 & 1.8351 & 1.8342 & 1.9213 & \underline{1.7642}
& 1.8309 & 1.8584 & 1.8551
& 1.8102 & \textbf{1.7194} \\

% SSR $\rightarrow$ 1
% & - & 0.64 & 0.61 & 0.33 & - & -
% & 0.33 & 0.78 & 0.60
% & \underline{0.74} & 0.69 \\
{SSR} $\rightarrow$ 1
& -- & 0.6419 & \underline{0.7701} & 0.3241 & -- & --
& 0.3418 & \textbf{0.8745} & 0.5950
& {0.6347} & {0.150} \\

% CRPS/MAE $\downarrow$
% & 1.54 & 1.18 & 1.19 & 1.29 & 1.98 & 1.45
% & 1.30 & 1.15 & 1.17
% & \underline{1.12} & \textbf{1.09} \\
{CRPS/MAE $\downarrow$}
& 1.5420 & 1.1908 & \underline{1.1531} & 1.2859 & 1.6046 & 1.4538
& 1.2740 & 1.1545 & 1.1946
& {1.1567} & \textbf{1.0976} \\
% SSIM $\rightarrow$ 1
% & \textbf{0.9936} & 0.9911 & 0.9916 & 0.9923 & 0.9889 & 0.9823
% & 0.9928 & 0.9903 & 0.9916
% & 0.9909 & 0.9920\\

% LPIPS $\downarrow$
% & \underline{0.0309} & 0.0384 & 0.0361 & 0.0347 & 0.0434 & 0.0666
% & 0.0327 & 0.0338 & 0.0396
% & \textbf{0.0282} & 0.0354 \\
{LPIPS $\downarrow$}
& {0.0737} & 0.0943 & 0.0790 & 0.0904 & \underline{0.0715} & \textbf{0.0641}
& 0.0793 & 0.0871 & 0.0842
& {0.0841} &  \underline{0.0715} \\

% ACC $\rightarrow$ 1
% & 0.8969 & 0.8952 & 0.9014 & 0.8969 & 0.8547 & \underline{0.9032}
% & 0.8957 & 0.8979 & 0.8984 & 0.9019 & \textbf{0.9068}\\
{ACC $\rightarrow$ 1}
& 0.8980 & 0.8973 & 0.8999 & 0.8991 & 0.8957 & {0.9016}
& 0.8994 & {0.8926} & 0.8952 & \underline{0.9017} & \textbf{0.9051}\\
\bottomrule
\end{tabular}
}
\par
\bigskip
\footnotesize Notes: For deterministic models, CRPS is replaced with MAE and SSR is omitted. \\Best results are shown in bold, and second-best results are underlined.
\vspace{-10.pt}
\end{table}

\textbf{Do the proposed methods improve downscaling performance?}
Figure~\ref{fig:compare_all} shows that UNet remains biased in overall scale. In contrast, mean--residual methods using UNet as the mean predictor, including CorrDiff, CFG, UC, and ReMatch\textsubscript{U}, produce predictions that are closer to the HR target in both spatial structure and magnitude, suggesting that residual generation helps correct errors in the deterministic mean estimate. 
A similar trend is observed for ReMatch\textsubscript{S}, which uses SwinIR as the mean predictor. Overall, ReMatch\textsubscript{U} and ReMatch\textsubscript{S} produce visually competitive reconstructions while better recovering the large-scale magnitude and small-scale spatial patterns of the HR field. 
Table~\ref{tab:main_results} shows that ReMatch performs strongly across both deterministic and probabilistic metrics. ReMatch\textsubscript{S} and ReMatch\textsubscript{U} rank highly on the main metrics of interest showing that the proposed residual distribution matching improves both reconstruction accuracy and stochastic calibration. 
ReMatch\textsubscript{U} achieves the best LPIPS, consistent with the qualitative improvements in Figure~\ref{fig:compare_all}. ReMatch also remains competitive on ACC, indicating that its gains extend beyond pointwise accuracy. Overall, these results show that ReMatch improves the accuracy--calibration tradeoff in real-world wind-field downscaling.

\textbf{Is making the residual target more generic sufficient?}
If train--test residual target shift is the core problem, one possible strategy is to make the residual target itself less specialized to the training domain. 
However, our results show that this is not sufficient: while CorrDiff\textsubscript{m} and CDM improve SSR relative to CorrDiff,
they do so at the cost of reduced accuracy and fidelity.

\textbf{How do CorrDiff variants compare?}
CFG and uncertainty-conditioned CorrDiff (UC) provide two alternative strategies for improving residual generation. CFG is commonly used to sharpen diffusion samples, but in this setting it does not improve calibration because it does not directly address residual target misspecification. 
Uncertainty conditioning improves SSR by providing the residual generator with additional information about the reliability of the mean predictor, but it depends on the quality of the learned uncertainty estimate and requires an additional training stage. 

\textbf{How much does the deterministic mean predictor matter?}
Among deterministic baselines, SwinIR achieves the strongest performance, showing that a stronger mean predictor can improve reconstruction accuracy. 
However, purely deterministic estimators, including UNet, ConvFNO, and SwinIR cannot quantify uncertainty. 
This limits their use in downstream tasks where calibrated ensemble spread is essential. ReMatch\textsubscript{S}, which adds residual generation on top of the SwinIR mean predictor, further improves both RMSE and CRPS/MAE, suggesting that ReMatch can benefit from stronger deterministic backbones while still providing calibrated stochastic predictions.

\textbf{Does ensemble calibration matter for downstream tasks?}
Predicted physical fields are often used as environments for downstream decision making and agent motion planning. To examine how residual calibration may affect such downstream use, we advect particles through 48 consecutive hourly wind fields. Figure~\ref{fig:trajectory_comparison} compares trajectories from the HR ground-truth field, CorrDiff ensembles, and ReMatch\textsubscript{U} ensembles. CorrDiff trajectories deviate from the ground-truth path as the particle moves away from the initial position, whereas ReMatch\textsubscript{U} remains better centered around the truth while preserving ensemble spread. 
\par
\begin{wrapfigure}{r}{0.52\linewidth}
    % \vspace{-1.3em}
    \centering
    \includegraphics[width=\linewidth]{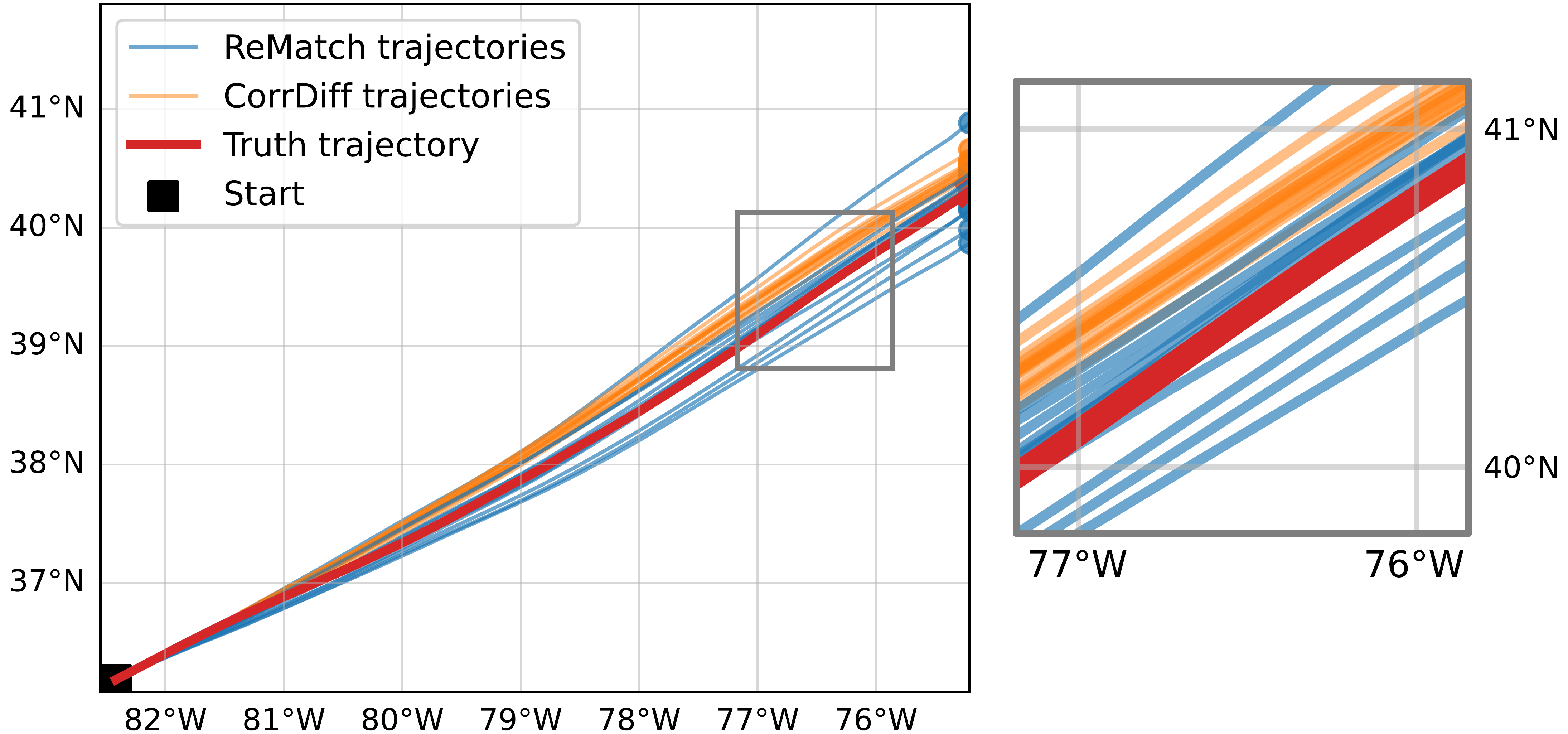}
    \caption{\textbf{Particle trajectory comparison using ensemble wind-field predictions over 48 consecutive hourly fields.} The red line denotes the trajectory advected by the HR ground-truth field, orange lines denote CorrDiff ensemble trajectories, and blue lines denote ReMatch\textsubscript{U} ensemble trajectories.}
    \label{fig:trajectory_comparison}
    \vspace{-8.em}
\end{wrapfigure} 
While this example alone does not establish a general downstream advantage, it suggests that better-calibrated spread can improve the usefulness of ensemble forecasts for downstream uncertainty-aware tasks.

\section{Discussion}
\vspace{-1.5em}
\label{sec:discussion}

\textbf{Role of calibration data.}
ReMatch relies on a held-out calibration set because residual target adaptation is only meaningful when the calibration residuals reflect the correction regime expected at test time. 
This is a weaker and more practical assumption than requiring access to test labels, but it is still an important one: ReMatch is not intended to solve arbitrary distribution shift without representative held-out data. 
In our experiments, we use a train--calibration--test split of approximately \(2{:}1{:}1\), which provides enough calibration samples to estimate a stable residual distribution while leaving sufficient data for fitting the mean predictor and residual generator. 
The optimal calibration size is likely task-dependent. 
A larger calibration set can better represent the test-time residual regime and improve transport stability, but it also reduces the amount of data available for training the mean predictor if the total dataset size is fixed. 
Studying this tradeoff more systematically is an important direction for future work, especially in settings where labeled high-resolution calibration data are expensive.

\textbf{Residual target misspecification is broader than a mean-error gap.}
Our theoretical analysis uses the residual-energy gap \(M_{\mathrm{tr}}(\phi) \neq M_{\mathrm{te}}(\phi)\) as a simple certificate of residual distribution shift, but this gap is not the only form that misspecification can take. 
Even when train and test residuals have similar overall magnitude, they may differ in spatial structure, tail behavior, extremes, or conditional dependence on the input. 
Thus, residual target misspecification should be understood more generally as a predictor-induced target shift: once the upstream mean predictor is fixed, the residual generator is trained on a correction target whose statistics depend on the reliability and structure of that predictor. 
The mean-error gap is therefore one measurable manifestation of a broader mismatch between the residual distribution seen during training and the residual distribution required at inference.

\textbf{Limitations and future directions.}
ReMatch performs residual distribution matching in a low-dimensional PCA space, which makes optimal transport more tractable than pixel-space matching. However, because PCA is a linear global representation, it may not fully preserve localized extremes, nonlinear structures, or rare high-frequency residual modes. Future work could replace PCA with learned or physics-aware latent representations that better capture dynamically important corrections. This direction is aligned with the broader use of optimal transport for distribution alignment and domain adaptation~\cite{courty2017ot}, as well as Wasserstein-based objectives for learning latent generative representations~\cite{tolstikhin2018wae, kolouri2018swae}.
Scalability is another limitation: our current implementation computes transport over the full training and calibration sets, which may become infeasible for much larger datasets. This could be addressed with stochastic optimization methods for large-scale OT~\cite{genevay2016stochastic}, regularized OT solvers~\cite{cuturi2013sinkhorn}, or learned neural transport maps for out-of-sample residual adaptation~\cite{korotin2022neural}.

\textbf{Conclusion}
\label{sec:conclusion}
We studied probabilistic downscaling under real-world bias through the lens of residual target misspecification, where a fixed mean predictor induces residual targets whose training distribution differs from the correction regime required at test time. We introduced ReMatch, which adapts training residuals toward a calibration residual regime using optimal transport while preserving the deterministic mean predictor as a useful prior. Across controlled bias experiments, real-world ERA5-to-HRRR wind downscaling, and a downstream particle-advection example, ReMatch improves the accuracy--calibration tradeoff over strong baselines.

More broadly, our results suggest that in residual generative pipelines, calibration failures should not be viewed only as deficiencies of the stochastic sampler; they can originate from the target construction itself. Designing reliable probabilistic downscaling systems therefore requires matching not only model outputs, but also the predictor-induced targets on which generative models are trained.

\section*{Acknowledgements}
We thank Preston Culbertson for helpful discussions on diffusion underdispersion in residual generative downscaling. This work was supported in part by the NVIDIA Academic Grant Program. This work was partly funded by NSF CCF 2312774, NSF OAC-2311521, NSF IIS-2442137, NSF IIS2505098, a gift to the LinkedIn-Cornell Bowers CIS Strategic Partnership, and an AI2050 Early Career Fellowship program at Schmidt Science.
\small

\bibliographystyle{plain}

\bibliography{references}
%%%%%%%%%%%%%%%%%%%%%%%%%%%%%%%%%%%%%%%%%%%%%%%%%%%%%%%%%%%%

% \appendix
\newpage
\appendix
\section{Appendix}
\subsection{Related Work}

\paragraph{Statistical and deep learning downscaling.}
Statistical downscaling models high-resolution fields from coarse inputs using learned relationships between resolutions~\cite{maraun2018statistical}. Classical approaches such as bias correction and quantile mapping are effective for marginal statistics, but often struggle to preserve spatial coherence and multivariate dependencies. Deep neural networks have improved deterministic downscaling accuracy through convolutional, transformer-based, and operator-learning architectures, including SwinIR~\cite{liang2021swinir} and U-FNO~\cite{wen2022u}. However, deterministic predictors produce only a single estimate and therefore cannot represent the spread of physically plausible high-resolution states. This has motivated probabilistic downscaling methods based on modern generative models~\cite{rampal2024enhancing}, including flow matching~\cite{fotiadis2025adaptive} and diffusion-based approaches~\cite{wan2023debias, springenberg2026diffscale}, which aim to generate calibrated ensembles for uncertainty-aware downstream applications.

\paragraph{Diffusion-based mean--residual downscaling.}
Diffusion models have recently become a strong framework for probabilistic atmospheric downscaling because they can model complex high-dimensional conditional distributions. CorrDiff~\cite{mardani2025residual} is a representative mean--residual approach: a deterministic regression model first predicts a high-resolution mean field, and a conditional diffusion model then generates stochastic residual corrections. Similar residual or two-stage formulations appear in precipitation nowcasting and forecasting models such as DiffCast~\cite{yu2024diffcast}, RainDiff~\cite{nguyen2025raindiff}, ResCast~\cite{lirescast}, and CasCast~\cite{gong2024cascast}. While this decomposition reduces the variance that the diffusion model must learn, prior work reports persistent under-dispersion in generated ensembles~\cite{mardani2025residual, fotiadis2025adaptive}. Our work studies this failure mode directly and argues that it arises from predictor-induced residual target misspecification, rather than only from limitations of the stochastic sampler.

\paragraph{Residual generative pipelines beyond atmospheric downscaling.}
Mean--residual decompositions have also been adopted outside geophysical downscaling. In speech synthesis, ResGrad~\cite{chen2022resgrad} uses residual diffusion to refine a deterministic mel-spectrogram prediction, while residual diffusion models have also been used for medical imaging~\cite{zhangpixel} and probabilistic time-series forecasting~\cite{lai2025rdit}. These methods share the same structural assumption: a deterministic predictor provides a useful prior, and a stochastic model learns the remaining correction. ReMatch focuses on a failure mode that can arise in such pipelines: the residual target learned by the stochastic generator remains dependent on the reliability of the upstream predictor, even when inherent input--target bias is present.

\paragraph{Distribution matching and domain adaptation.}
Our approach is related to domain adaptation, where source and target distributions are aligned to improve transfer under distribution shift~\cite{ben2010theory, pan2010survey}. Classical covariate-shift and importance-weighting methods reweight source samples to match the target input distribution under the assumption that the conditional target remains unchanged~\cite{shimodaira2000improving,sugiyama2012machine}, while OT-based domain adaptation aligns source and target feature-label distributions through a learned transport plan~\cite{courty2017ot,courty2017joint,redko2019multisource}. In downscaling, prior work uses OT to debias the coarse input before conditional diffusion sampling~\cite{wan2023debias}. ReMatch differs in that it does not primarily debias the input distribution. Instead, it aligns the predictor-induced residual target distribution that the stochastic generator is trained to model.

\paragraph{Distribution matching with projected transport.}
Optimal transport provides a principled framework for comparing and aligning probability distributions~\cite{villani2009optimal,peyre2019computational}. Entropic regularization improves the computational scalability of empirical OT~\cite{cuturi2013sinkhorn}. However, estimating Wasserstein distances in high-dimensional spaces is statistically challenging, with convergence rates that deteriorate with dimension~\cite{fournier2015rate,weed2019sharp}. A common remedy is to compare distributions through lower-dimensional projections or compact representations, as in sliced Wasserstein distances~\cite{bonneel2015sliced}, max-sliced and subspace-robust variants~\cite{deshpande2019max, paty2019subspace, lin2020projection}, and latent-space transport or regularization methods~\cite{tolstikhin2018wae, kolouri2018swae}. ReMatch follows this projection-based perspective by computing transport costs in a latent representation of residuals and conditioning inputs, rather than directly in the full residual field. This provides a tractable way to match residual correction targets, while suggesting that richer physics-aware representations or scalable projected distribution-matching objectives could further improve residual alignment in future work.
\newpage
\subsection{Algorithm Overview}\label{sec:alg}
\begin{algorithm}[h]
\caption{ReMatch training pipeline}
\label{alg:ReMatch}
\begin{algorithmic}[1]
\State \textbf{Input:} 
$\mathcal{D}_{\mathrm{tr}} = \{(x_i, y_i)\}_{i=1}^{n_{\mathrm{tr}}}$, 
$\mathcal{D}_{\mathrm{cal}} = \{(x_j, y_j)\}_{j=1}^{n_{\mathrm{cal}}}$
\State \textbf{Stage 1. Mean training:}
\State \hspace{1em} $\phi^\star \leftarrow \arg\min_\phi \mathcal{L}_{\mathrm{reg}}(\phi;\mathcal{D}_{\mathrm{tr}})$
\State \textbf{Stage 2. Residual target alignment:}
\State \hspace{1em} Compute residuals $r_i^{\mathrm{tr}} = y_i - \mu_{\phi^\star}(x_i)$ and $r_j^{\mathrm{cal}} = y_j - \mu_{\phi^\star}(x_j)$
\State \hspace{1em} Project onto PCA: $z_i^{\mathrm{tr}} = U_K^\top r_i^{\mathrm{tr}}$, \quad $z_j^{\mathrm{cal}} = U_K^\top r_j^{\mathrm{cal}}$, \quad $\ell_i^{\mathrm{tr}} = V^\top x_i$, $\ell_j^{\mathrm{cal}} = V^\top x_j$
\State \hspace{1em} Define cost $c_{ij} = \alpha \|z_i^{\mathrm{tr}}-z_j^{\mathrm{cal}}\|_2^2 + \lambda_{\mathrm{cond}} \|\ell_i^{\mathrm{tr}}-\ell_j^{\mathrm{cal}}\|_2^2$
\State \hspace{1em} Compute sparse transport plan $\pi$
\State \hspace{1em} Transport: $\tilde{z}_i^{\mathrm{tr}} = \sum_j \pi_{ij} z_j^{\mathrm{cal}} / \sum_j \pi_{ij}$, \quad $\tilde{r}_i = U_K \tilde{z}_i^{\mathrm{tr}}$, \quad $\tilde{\mu}_i = y_i - \tilde{r}_i$
\State \hspace{1em} Define residual-learning dataset
$\mathcal{D}_{\mathrm{res}}=
\{(x_i,\tilde{\mu}_i,\tilde{r}_i)\}_{i=1}^{n_{\mathrm{tr}}}
\cup
\{(x_j,\mu_{\phi^\star}(x_j),r_j^{\mathrm{cal}})\}_{j=1}^{n_{\mathrm{cal}}}.$

\State \textbf{Stage 3. Residual learning:}
\State \hspace{1em} $\theta^\star \leftarrow
\arg\min_\theta \mathcal{L}_{\mathrm{res}}(\theta;\mathcal{D}_{\mathrm{res}})$
\State \textbf{Return:} $\mu_{\phi^\star},\; q_{\theta^\star}$
\end{algorithmic}
\end{algorithm}
\subsection{Residual Distribution Matching via Optimal Transport}
\label{appendix:optimal_transport}
\label{appendix:pca}
Directly computing optimal transport over full residual fields is computationally expensive and can be overly sensitive to grid-scale noise. We therefore compute the transport cost in compact PCA representations of the residual targets and low-resolution conditioning inputs~\cite{agustsson2017optimal,yuan2024optimal,liu2018latent}. The PCA bases are fitted using the concatenated training and calibration samples, and each field is projected onto the leading components before transport.

For PCA truncation, we choose the number of modes so that the reconstruction fidelity exceeds \(99\%\) for all channels. For the BlastNet dataset~\cite{chung2023turbulence}, we use 200 modes for the high-resolution residual field and 70 modes for the low-resolution conditioning field. For the HRRR dataset~\cite{dowell2022high}, we use 100 residual modes and for ERA5~\cite{hersbach2020era5}, we use 30 modes. BlastNet requires more modes because it contains a wider variety of spatial patterns induced by varying PDE coefficients and flow conditions.

For a residual sample \(r_\phi\), let
\[
z = U_K^\top (r_\phi - \bar r) \in \mathbb{R}^{K}
\]
denote its residual PCA representation, where \(U_K\) is the residual PCA basis and \(\bar r\) is the residual mean used for centering. Similarly, for a low-resolution input \(x\), let
\[
\ell = V_{K_{\mathrm{cond}}}^\top (x - \bar x)
\in \mathbb{R}^{K_{\mathrm{cond}}}
\]
denote its conditioning representation, where \(V_{K_{\mathrm{cond}}}\) is the PCA basis for the low-resolution input. Let
\[
\{z_i^{\mathrm{tr}}, \ell_i^{\mathrm{tr}}\}_{i=1}^{n_{\mathrm{tr}}}
\quad \text{and} \quad
\{z_j^{\mathrm{cal}}, \ell_j^{\mathrm{cal}}\}_{j=1}^{n_{\mathrm{cal}}}
\]
denote the resulting low-dimensional representations for the training and calibration sets, respectively.

We compute transport separately for each output channel. For each channel, we define the pairwise cost between training sample \(i\) and calibration sample \(j\) as
\[
c_{ij}
=
\alpha
\left\|z_i^{\mathrm{tr}} - z_j^{\mathrm{cal}}\right\|_2^2
+
\lambda_{\mathrm{cond}}
\left\|\ell_i^{\mathrm{tr}} - \ell_j^{\mathrm{cal}}\right\|_2^2 .
\]
The first term encourages residual similarity, while the second term preserves consistency with the low-resolution conditioning input. This prevents the transported residual from being matched to a calibration residual whose associated low-resolution field is incompatible with the source input. In all experiments, we set \(\alpha=1.0\) and \(\lambda_{\mathrm{cond}}=1.0\).

To make the transport computation scalable, we construct a sparse \(k\)-nearest-neighbor graph in the augmented PCA feature space and solve a balanced entropic Sinkhorn problem only over the resulting candidate pairs~\cite{cuturi2013sinkhorn}. Specifically, we use \(k=32\), entropic regularization \(0.1\), and 1500 Sinkhorn iterations. The sparse transport plan \(\pi\) is computed with uniform source and target marginals. When the number of training and calibration samples differs, the target marginal simply assigns a larger mass to each calibration sample.

Because a sparse \(k\)-nearest-neighbor graph can leave some calibration samples with too few incoming source candidates, we augment the graph before Sinkhorn normalization. In particular, we enforce a minimum target in-degree proportional to the source-to-target sample ratio by adding nearest reverse-neighbor edges for under-covered calibration samples. This improves feasibility of the balanced transport constraints while keeping the sparse graph size fixed.

After computing the sparse plan, we construct transported residual coefficients using a weighted top-\(m\) selection rather than averaging over all \(k\) neighbors. For each training sample, we select the top \(m=3\) calibration candidates according to the transport weights, with a capacity constraint limiting how often each calibration sample can be reused. This reduces collapse onto a small subset of calibration residuals while avoiding overly diffuse barycentric averages. The selected weights are renormalized within the top-\(m\) set.

For each training sample \(i\), the transported residual representation is therefore computed as
\[
\tilde{z}_i^{\mathrm{tr}}
=
\sum_{j \in \mathcal{T}_i}
\tilde{\pi}_{ij} z_j^{\mathrm{cal}},
\]
where \(\mathcal{T}_i\) denotes the selected top-\(m\) calibration candidates and \(\tilde{\pi}_{ij}\) denotes the corresponding normalized transport weight. We then reconstruct the transported residual in pixel space as
\[
\tilde{r}_i
=
U_K \tilde{z}_i^{\mathrm{tr}} + \bar r .
\]

Finally, we define the corresponding pseudo-mean by enforcing exact consistency with the original high-resolution target:
\[
\tilde{\mu}_i := y_i - \tilde{r}_i .
\]
This yields adapted pseudo training samples
\[
(x_i, \tilde{\mu}_i, \tilde{r}_i)
\]
that preserve the mean--residual decomposition
\[
y_i = \tilde{\mu}_i + \tilde{r}_i
\]
by construction. Thus, ReMatch shifts the residual target distribution toward the calibration residual regime while maintaining sample-wise consistency among the input, pseudo-mean, residual, and ground-truth target.

\subsection{Synthetic Bias-Controlled Benchmark Details}
\label{appendix:blastnet_details}
\begin{figure}[b]
\centering
\includegraphics[width=0.9\linewidth]{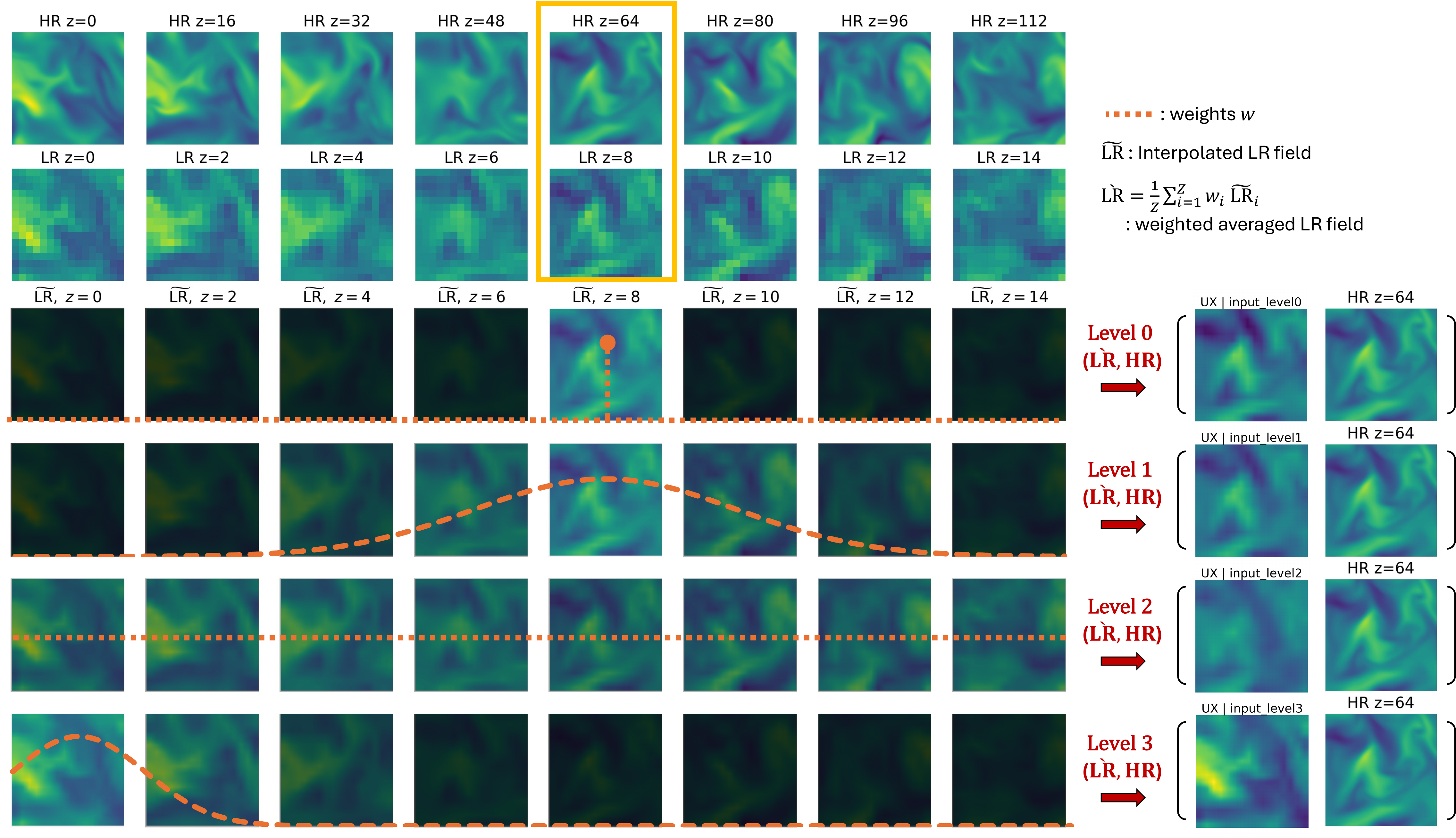}
\caption{Synthetic LR bias levels used in the BlastNet benchmark. Level 0 uses the original interpolated LR input. Levels 1--3 introduce progressively stronger structured mismatch by mixing slices along the \(z\)-axis with Gaussian weights.}
\label{fig:blastnet_bias_levels}
\end{figure}
We use the BLASTNet Momentum128 3D super-resolution dataset~\cite{chung2023turbulence} as a controlled synthetic benchmark. The dataset provides clean paired low- and high-resolution turbulent flow fields, where the low-resolution inputs are generated from the corresponding high-resolution DNS fields through filtering and downsampling. Each high-resolution sample is a three-dimensional field of size \(128 \times 128 \times 128\), with corresponding low-resolution fields provided at multiple downsampling scales. Both the input and target fields contain four physical channels: density \texttt{RHO} and the three velocity components \texttt{UX}, \texttt{UY}, and \texttt{UZ}. We use clusters 4, 11, 14, and 17, and take the \(8\times\) downsampled inputs (spatial resolution \(16 \times 16\)). Since our implementation operates on 2D inputs, we extract slices along the \(z\)-axis. This also enables controlled bias injection along the depth direction.

We define four bias levels. Level 0 uses the original interpolated low-resolution field (clean interpolation). For levels 1--3, we replace each low-resolution slice with a weighted mixture of slices along the \(z\)-axis using Gaussian weights
\[
w(z)=\exp\!\left(-\frac12\left(\frac{z-c}{\sigma}\right)^2\right),
\]
where the center \(c\) and bandwidth \(\sigma\) are chosen to produce progressively stronger structured mismatch: local mixture (\(\sigma=2\)) for level 1, broad averaging for level 2, and sharp distant centering (\(\sigma=1\)) for level 3.

Figure~\ref{fig:blastnet_bias_levels} illustrates the four bias levels and the corresponding weighting patterns. The baseline method~\cite{mardani2025residual} is trained using the entire training set of 406 samples. For ReMatch, we partition this training set into a training subset (\(n=270\)) and a calibration set (\(n=136\)). The test set contains 52 samples.

\subsubsection{Controlled Synthetic Study of Bias-induced Residual Misspecification}
Figure~\ref{fig:blastnet_channel_wise} shows channel-wise RMSE and SSR across four bias levels. Across all channels, ReMatch yields increasingly larger SSR improvements as the bias level increases, with only minor RMSE degradation or, in some cases, RMSE improvement.
\begin{figure}[h]
\centering
\includegraphics[width=1\linewidth]{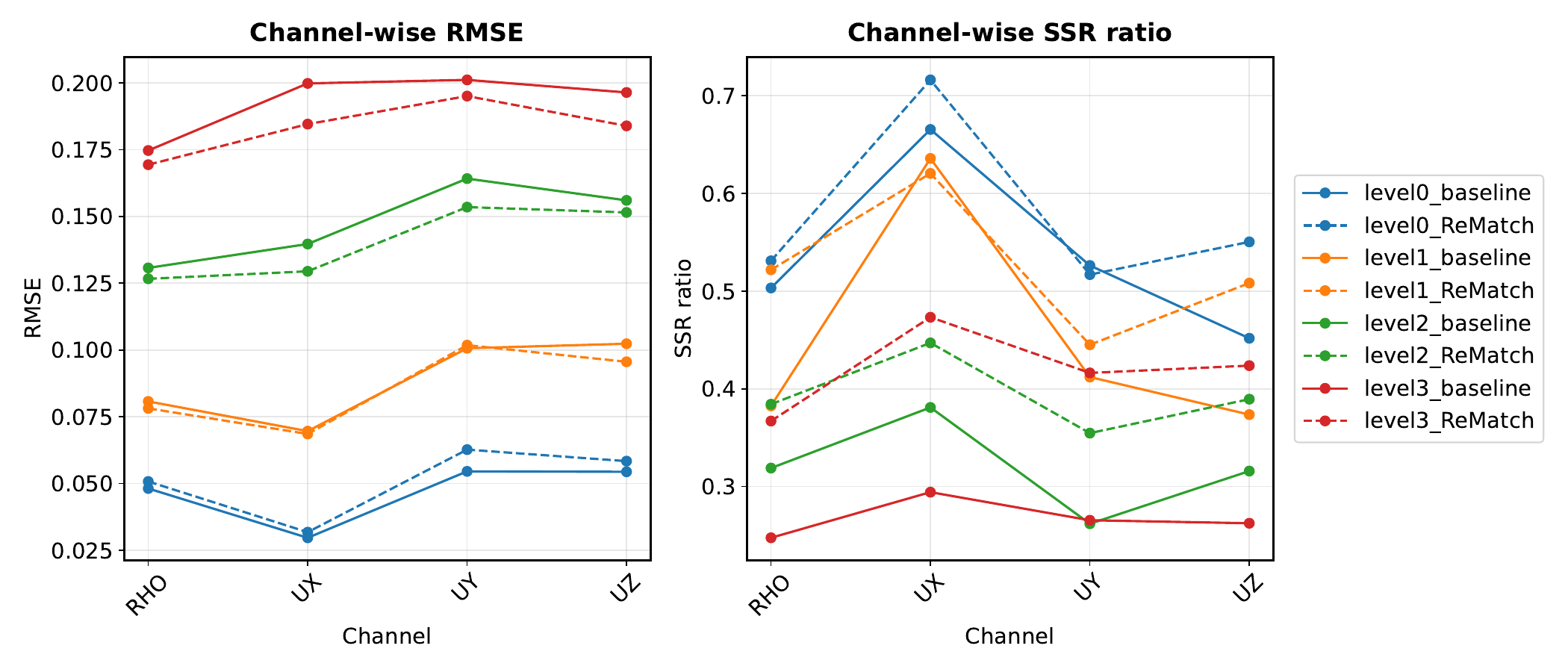}
\caption{Channel-wise RMSE and SSR ratio over different bias levels.}
\label{fig:blastnet_channel_wise}
\end{figure}

\subsection{ERA5-HRRR Dataset Details}
\label{appendix:hrrr_era5_details}
We use the \(u\)- and \(v\)-components of wind at five lower-stratospheric pressure levels: 50, 75, 100, 125, and 150 hPa. The experiments are conducted over a \(500\,\mathrm{km} \times 500\,\mathrm{km}\) region in the northeastern CONUS, defined by \(\mathrm{lat}\in[36.0795, 41.8914]^\circ\) and \(\mathrm{lon}\in[-82.5501, -75.2087]^\circ\).

\paragraph{ERA5~\cite{hersbach2020era5}} We downloaded \(25 \times 25\,\mathrm{km}\) scale ERA5 reanalysis data for every hour from 2018-2021. We note that ERA5 only had 70 hPa instead of 75 hPa pressure level data, but we consider them as the same channel since it does not detract from our setting of input/output mismatch. To create the low-resolution input for our experiments, we used bilinear interpolation to upsample data to the \(3 \times 3\,\mathrm{km}\) resolution that HRRR uses. We used 2018-2020 as our train set and 2021 as our test set. For methods that need a calibration set (UC, ReMatch), we used 2018-2019 as the train set, 2020 as the calibration set, and 2021 as the test set.

\paragraph{HRRR~\cite{dowell2022high}} We used \(3 \times 3\,\mathrm{km}\) scale HRRR pressure data for every hour from 2018-2021 using lead times and forecasts of up to 6 hours to fill gaps.

\begin{figure}[h]
\centering
\includegraphics[width=1\linewidth]{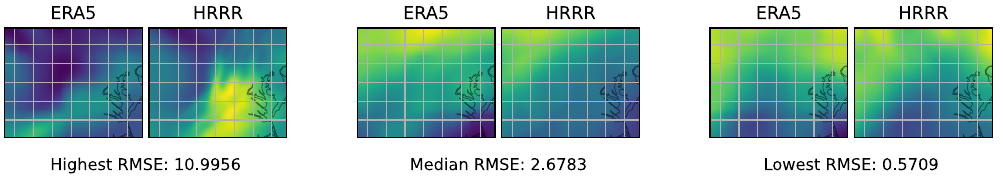}
\caption{\textbf{Downscaling bias in ERA5 and HRRR has a positive relationship with RMSE.} We show three samples of upsampled ERA5 data and HRRR 100hPa $u$-wind from our 2018-2020 training set. The left sample with high RMSE shows a clear mismatch in the center while the right sample with low RMSE generally looks the same, indicating that RMSE is a good estimator of input/output mismatch.}
\label{fig:era_hrrr_gap}
\end{figure}

To create the combined dataset, we also added four sinusoidal-embedding temporal input channels for the day and month. Fig.~\ref{fig:era_hrrr_gap} indicates the main challenge of the real-world downscaling task with atmospheric data. There is an inherent mismatch between the LR and HR data over the same region since they come from two different sources. Table~\ref{tab:era_hrrr_channelwise_stats} shows higher bias levels for lower pressure levels, especially in the v-wind channel. We hypothesize this is due to lower pressure levels (bigger hPa) being closer to the jet stream (250 hPa) which has strong wind magnitudes that could cause more sensor inaccuracies.
\begin{table}[t]
\centering
\caption{\textbf{Channel-wise statistics of downscaling bias.}}
\label{tab:era_hrrr_channelwise_stats}
%\resizebox{\linewidth}{!}{
\begin{tabular}{l|ccccccccccc}
\toprule
Channel & 50u & 50v & 75u & 75v & 100u & 100v & 125u & 125v & 150u & 150v \\
\midrule
Mean & 1.99 & 2.59 & 2.77 & 3.52 & 2.68 & 4.27 & 3.08 & 5.19 & 3.62 & 6.19 \\
Std. Dev. & 0.81 & 1.39 & 1.18 & 1.93 & 1.08 & 2.21 & 1.27 & 2.56 & 1.55 & 2.30 \\
% 50u & 1.99 & 0.81\\
% 50v & 2.59 & 1.39\\
% 75u & 2.77 & 1.18\\
% 75v & 3.52 & 1.93\\
% 100u & 2.68 & 1.08 \\
% 100v & 4.27 & 2.21\\
% 125u & 3.08 & 1.27\\
% 125v & 5.19 & 2.56\\
% 150u & 3.62 & 1.55 \\
% 150v & 6.19 & 2.30\\
\bottomrule
\end{tabular}
%}
\par
\bigskip
\end{table}

\subsection{Implementation Details}
\label{appendix:baselines}
\label{app:experiment_details}
ReMatch is not tied to a specific architecture. It only assumes a deterministic mean predictor followed by a stochastic residual generator, so the mean predictor can be instantiated with any suitable pointwise estimator and the residual stage with any stochastic or ensemble-based generative model.
In our experiments, we use U-Net regression and SwinIR~\cite{liang2021swinir} as deterministic mean predictors, trained with \(\ell_2\) and \(\ell_1\) losses, respectively. For stochastic residual prediction, we use EDM-style diffusion preconditioning and sampling~\cite{karras2022elucidating} with a U-Net-based score network~\cite{song2020score}. The low-resolution input \(x\) and mean field \(\mu\), either transported pseudo-mean from training set \(\tilde{\mu}_i\) or mean predictor output in calibration set \(\mu_\phi(x_j)\), are concatenated channel-wise as the conditioning input. Other stochastic residual generators can be used in the same framework.

Unless otherwise noted, all UNet-based regression and diffusion models are trained under the same implementation settings to ensure a fair comparison. We use data-parallel training on four NVIDIA A100 GPUs with a per-GPU batch size of 32 for experiments, in which training takes about 8-10 hours. All regression UNets and diffusion models are trained for 8M processed samples, following the training-duration convention used in the CorrDiff implementation at \url{https://github.com/NVIDIA/physicsnemo/blob/main/examples/weather/corrdiff/README.md}.

For the CorrDiff diffusion backbone, we instantiate \texttt{EDMPrecondSuperResolution} with a \texttt{SongUNetPosEmbd} denoising network. The \texttt{mini} model-size preset controls the capacity of the underlying U-Net through three parameters: \texttt{model\_channels}, \texttt{channel\_mult}, and \texttt{attn\_resolutions}. Specifically, \texttt{model\_channels}=64 sets the base channel width; \texttt{channel\_mult}=\([1,2,2]\) defines a three-level encoder--decoder hierarchy with feature widths \(64,128,128\); and \texttt{attn\_resolutions}=\([16]\) inserts self-attention blocks at the \(16\times16\) feature-map resolution. 
% During generation, we use 12 ensemble members and 4 NVIDIA A100 GPUs. 
All other architectural choices follow the default PhysicsNeMo CorrDiff settings.

\paragraph{CorrDiff~\cite{mardani2025residual}}\label{appendix:corrdiff}A two-stage mean--residual baseline consisting of a regression mean predictor followed by a diffusion model for stochastic residual prediction. We also evaluate \textbf{CorrDiff\textsubscript{m}}, which uses a reduced-capacity mean predictor to mitigate residual target distribution shift. We set \texttt{model\_channels}=\(16\), \texttt{channel\_mult}=\([1,2]\), and \texttt{attn\_resolutions}=\([4]\) and 3M total processed samples.
\paragraph{UNet~\cite{song2020score}} A deterministic regression baseline to map the LR input directly to the HR output.
\paragraph{Conditional Diffusion Model (CDM)} A diffusion model conditioned on the LR input without an upstream mean predictor.
\paragraph{CFG~\cite{springenberg2026diffscale}}
Classifier-free guidance applied only to the residual diffusion model in CorrDiff to sharpen residual samples and mitigate small-scale underdispersion. We implement classifier-free guidance (CFG) on top of the CorrDiff architecture as a baseline. CFG allows conditional sampling without training an additional classifier. During training for diffusion, we zero out the conditioning on the LR input 10\% of the time so the model learns both conditional and unconditional inputs. During inference, we use 18 denoising steps in the stochastic sampler with a cosine guidance weighting schedule from $w=$ 0 to 8. Comparing this to fixed weights from 0 to 8, we found that scheduling increasing guidance helped maintain RMSE skill while improving spread. The sample at each time step is then a linear combination of the conditional and unconditional output: $J(x_t|\emptyset]) + w[J(x_t|y_{lr}) - J(x_t|\emptyset)]$ This encourages the model to learn features induced specifically by the conditioning.

\paragraph{UC~\cite{kendall2017uncertainty}} A CorrDiff variant that conditions the residual diffusion model on an uncertainty estimate of the regression mean, predicted by a lightweight UNet. We consider two variants: \(\mathrm{UC}_{\mu}\), which conditions on the predicted MAE, and \(\mathrm{UC}_{q}\), which conditions on the predicted lower and upper error quantiles, \(q_5\) and \(q_{95}\).
To construct these baselines, we first train a deterministic mean predictor on the 2018--2019 training set. We then train a separate uncertainty estimator on the 2020 calibration set to predict either the MAE or the \(q_5/q_{95}\) error quantiles of the mean predictor. The uncertainty estimator uses the same UNet to the regression mean and is trained for 8M samples. Using the calibration set as a proxy for the test regime allows the uncertainty network to learn held-out error statistics, rather than relying on training-time errors that may be overconfident.
% For training the uncertainty network, we use a reduced-capacity regression network from \textbf{CorrDiff\textsubscript{m}} at 3M processed samples and we train the uncertainty network on the 2020 validation set for 8M processed samples.
% Using the calibration set as proxy of test set helps the uncertainty network to fit in test-time error instead of when it can behave overconfident during train time.
\begin{itemize}
\item  \(\mathrm{UC}_{\mu}\): The network predicts a scalar RMSE value of the residuals between the ground truth and the mean predictor. Uses L2 loss.
\item \(\mathrm{UC}_{q}\): The network predicts scalar quantile values for Q5 and Q95. Uses Pinball loss.
\end{itemize}
After training the mean predictor and uncertainty estimator sequentially, we train the residual diffusion model conditioned on the LR input, the predicted mean, and an uncertainty measure. 
During diffusion training, we use the ground-truth uncertainty measure computed from the mean-prediction error, while at test time we replace it with the output of the uncertainty estimator. 
This choice stabilizes diffusion training by providing accurate uncertainty conditioning during optimization, while preserving a fully predictive pipeline at inference. 
The residual diffusion model is trained on the combined 2018--2020 data, including both the training and calibration splits.

\paragraph{SwinIR~\cite{liang2021swinir}}
A deterministic transformer-based super-resolution baseline, which directly maps the LR input to the HR field. We implement a SwinIR-based super-resolution baseline following the network configuration used in the original SwinIR model at \href{https://github.com/JingyunLiang/SwinIR}{https://github.com/JingyunLiang/SwinIR}. 
For the network architecture, we use the main SwinIR configuration from the original paper: the embedding dimension is set to 180, the model contains six Residual Swin Transformer Blocks (RSTBs), and each RSTB contains six Swin Transformer layers. 
The number of attention heads is set to 6 in each RSTB, the MLP ratio is set to 2, and the residual connection inside each RSTB uses a single \(3 \times 3\) convolution. 
We also use the standard multi-step PixelShuffle reconstruction head with an intermediate feature dimension of 64. 
The only modification from the original SwinIR parameter setting is the window size. 
While the original model uses a window size of 8 for super-resolution, we set the window size to 7 so that the self-attention windows exactly divide the low-resolution input size of \(21 \times 21\).
The model is trained using a \(\ell_1\) loss between the predicted and ground-truth high-resolution fields. 
We train the SwinIR baseline for 100k processed samples, at which point the validation loss is observed to have converged.

\paragraph{Conv-FNO~\cite{wen2022u}}  A deterministic neural-operator baseline, which combines Fourier neural operator layers with convolutional blocks for direct LR-to-HR super-resolution.
We implement an FNO-based super-resolution baseline using a 2D FNO model from NVIDIA's PhysicsNeMo at \url{https://github.com/NVIDIA/physicsnemo/blob/main/physicsnemo/models/fno/fno.py} and modifying the architecture and parameters according to the BlastNet Conv-FNO model at \url{https://github.com/blastnet/blastnet2_sr_benchmark/blob/ae2a3a6b1c1c8fb9dcf478aa47ee254159c42f1b/models/convfno.py#L12}, that modified a 3D FNO model to be used for super-resolution.

Our main modifications were replacing FNO's decoder with an Upsampler block and adding a ResidualBlock to the spectral layers in FNO2DEncoder. We use 34 features, 34 latent channels, 32 spectral and residual layers, kernel size of 3, 2 modes, a padding of 6, and a 8x upscaling factor. These parameters were chosen to match the implementation in BlastNet to remain similar to an established super-resolution baseline. The model was trained using MSE loss between the predicted and ground-truth high-resolution fields. We train the Conv-FNO baseline for 8 million processed samples, similar to the CorrDiff regression capacity.
%iclude image here
\newpage
\subsection{Additional Analysis on Real-World Wind Field Downscaling Task}
\label{appendix:more_results}
\subsubsection{Method Comparisons}
We provide additional qualitative examples of downscaling results for all baseline models and the proposed method.
\begin{figure}[h]
    \centering
    \begin{subfigure}{1\linewidth}
        \centering
        \includegraphics[width=1\linewidth]{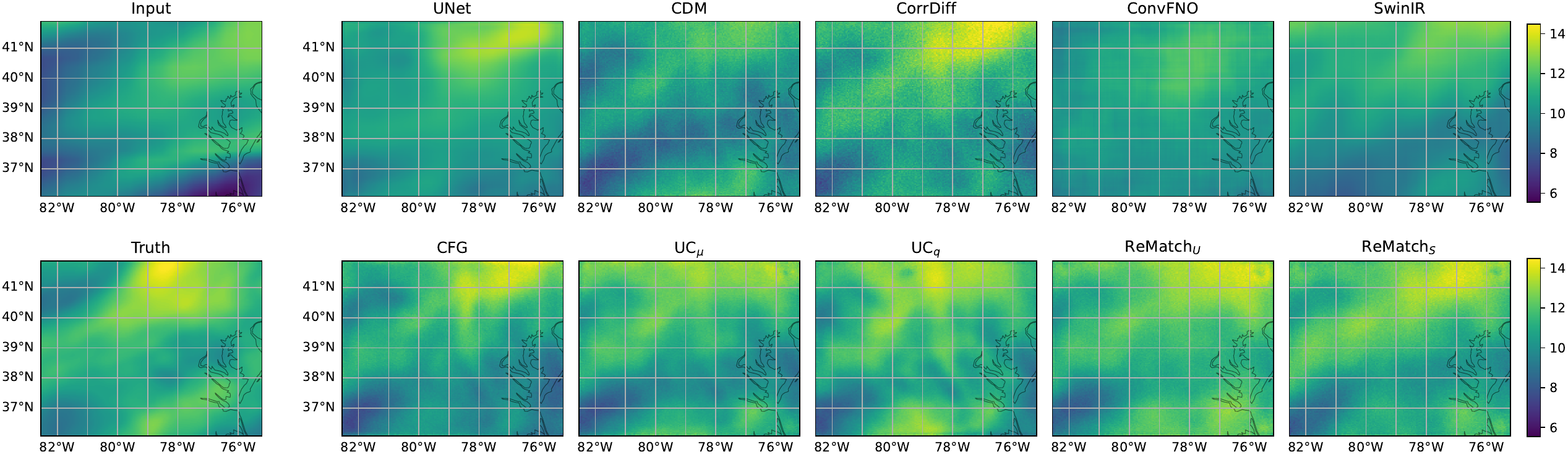}
        \caption{Qualitative comparison of downscaled 125 hPa \(u\)-wind fields on 2021-07-20 00:00:00.}
        \label{fig:method_all_75u}
    \end{subfigure}
    \\
    \begin{subfigure}{1\linewidth}
        \centering
        \includegraphics[width=1\linewidth]{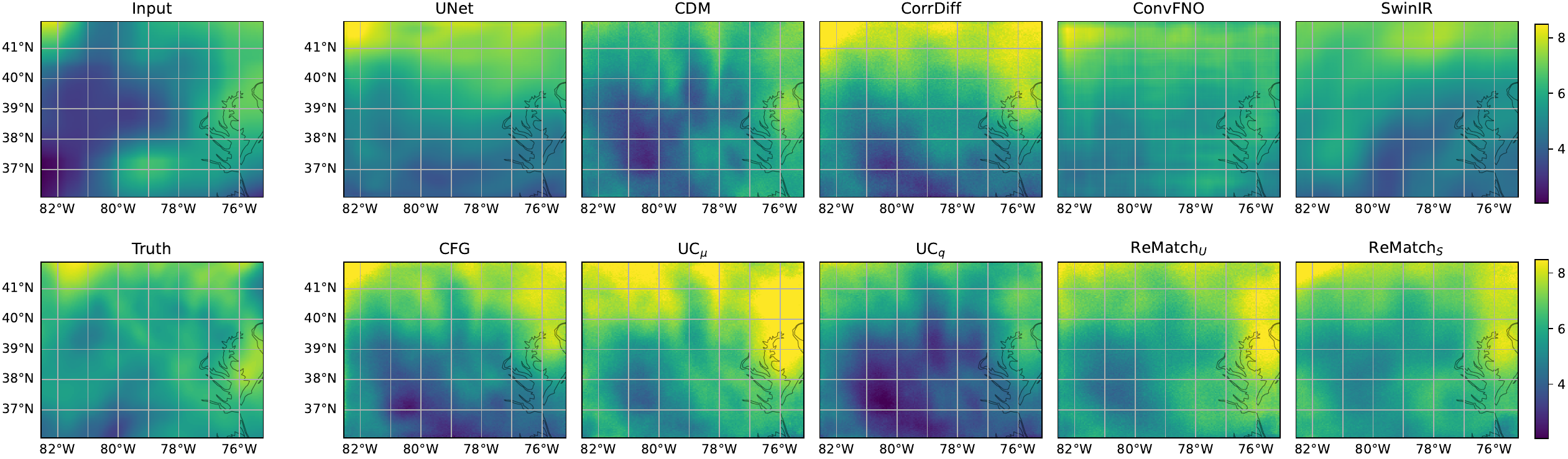}
        \caption{Qualitative comparison of downscaled 75 hPa \(v\)-wind fields on 2021-09-14 01:00:00.}
        \label{fig:method_all_100v}
    \end{subfigure}
    \\
    \begin{subfigure}{1\linewidth}
        \centering
        \includegraphics[width=1\linewidth]{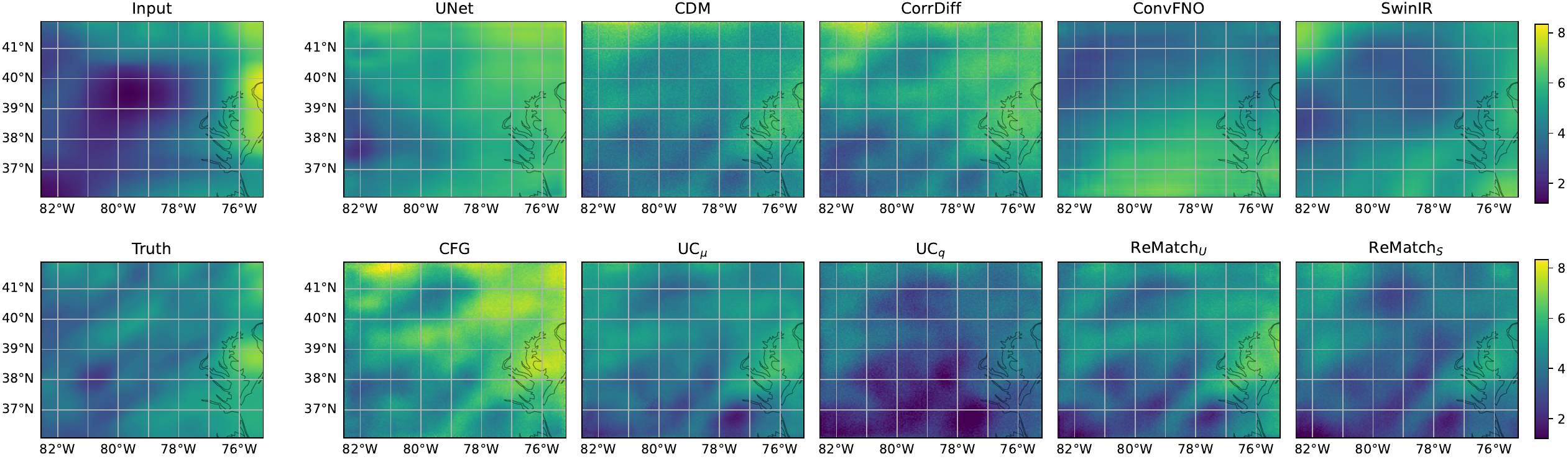}
        \caption{Qualitative comparison of downscaled 125 hPa \(u\)-wind fields on 2021-07-06 19:00:00.}
        \label{fig:method_all_125u}
    \end{subfigure}
    \\
    \begin{subfigure}{1\linewidth}
        \includegraphics[width=1\linewidth]{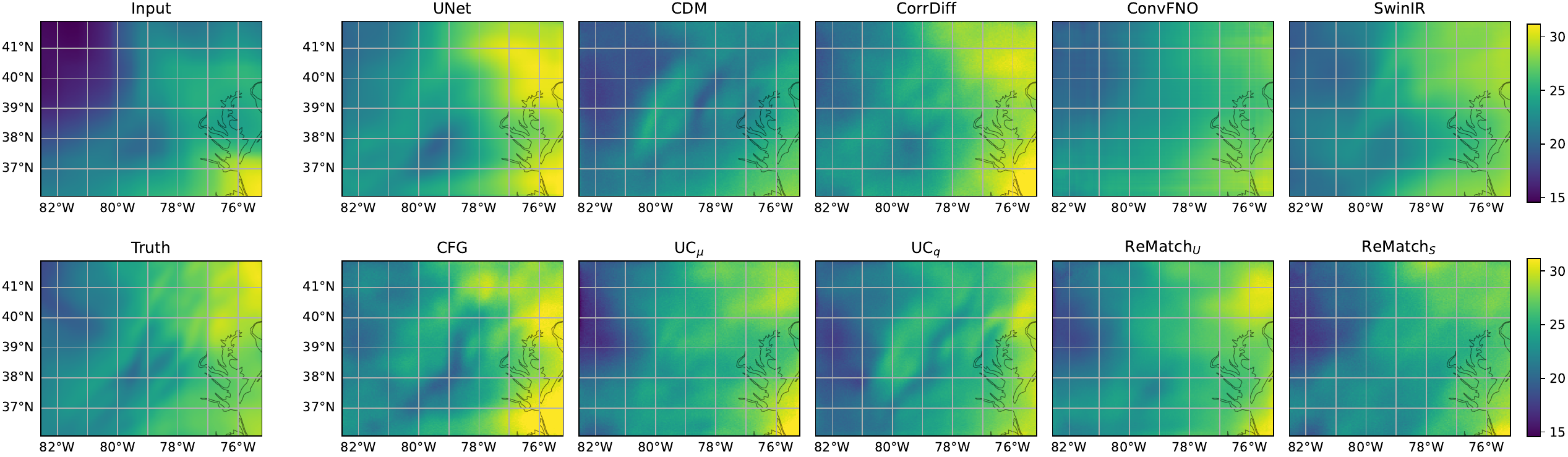}
        \caption{Qualitative comparison of downscaled 150 hPa \(u\)-wind fields on 2021-10-19 7:00:00.}
        \label{fig:method_all_150u}
    \end{subfigure}
    \\
    \caption{Qualitative comparison of all baseline and proposed methods across randomly selected channels and dates.}
    \label{fig:all_method_exmaples}
\end{figure}

\newpage
\subsection{Ensemble Samples Comparison}
We further analyze individual ensemble samples. Figure~\ref{fig:compare_all} shows downscaling results for the 50 hPa $v$-wind field on 2021-09-05 at 10:00:00. In Figure~\ref{fig:ensembles}, we show all 12 ensemble samples of the final predictions, \(\mu_\phi(x) + r_\theta(x,\mu_\phi)\), together with their predicted residuals \(r_\theta(x,\mu_\phi)\).

\begin{figure}[h]
    \centering
    \begin{subfigure}{1\linewidth}
        \centering
        \includegraphics[width=\linewidth]{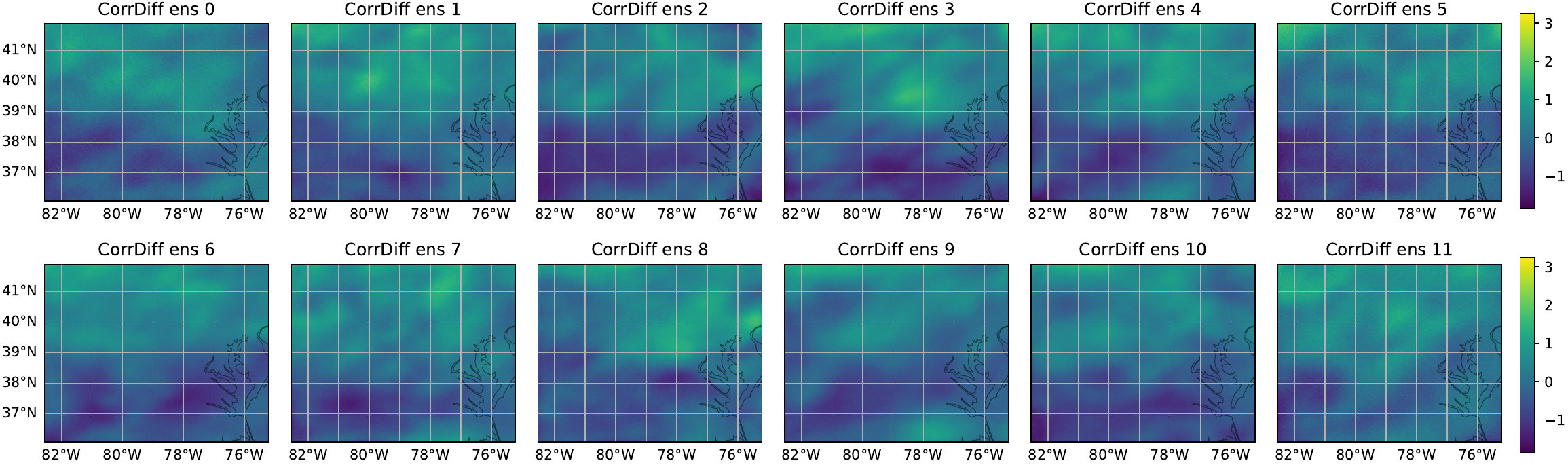}
        \caption{CorrDiff all prediction ensemble samples.}
        \label{fig:corrdiff_predictions}
    \end{subfigure}
    \\
    \begin{subfigure}{1\linewidth}
        \centering
        \includegraphics[width=\linewidth]{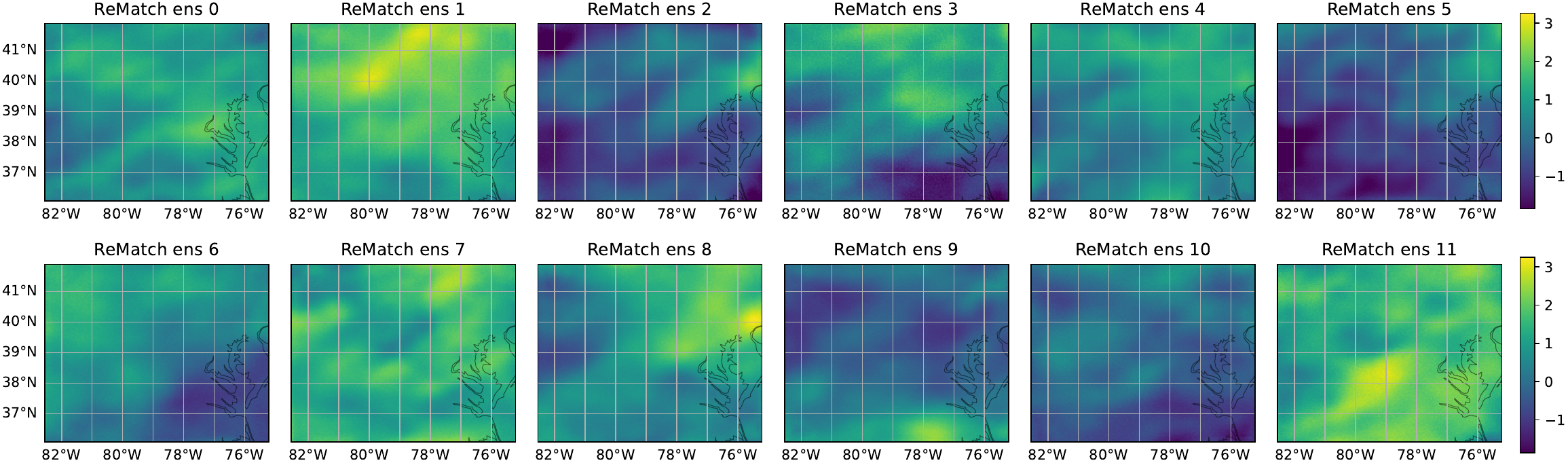}
        \caption{ReMatch\textsubscript{U} all prediction ensemble samples.}
        \label{fig:rematch_predictions}
    \end{subfigure}
    \\
    \begin{subfigure}{1\linewidth}
        \centering
        \includegraphics[width=\linewidth]{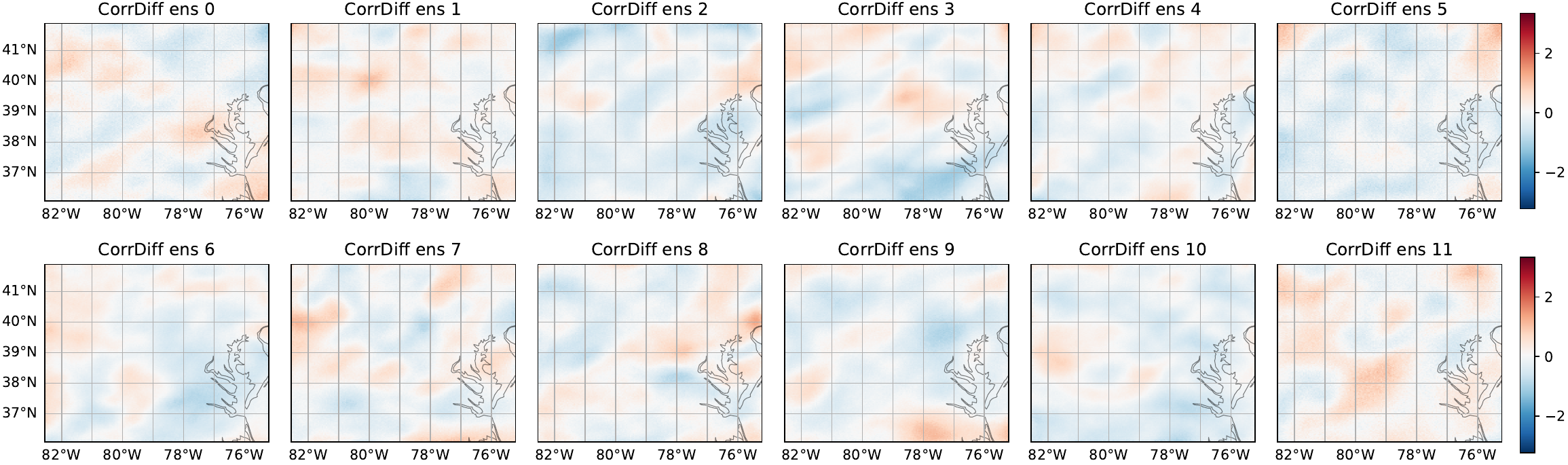}
        \caption{CorrDiff all residual ensemble samples.}
        \label{fig:corrdiff_residuals}
    \end{subfigure}
    \\
    \begin{subfigure}{1\linewidth}
        \centering
        \includegraphics[width=\linewidth]{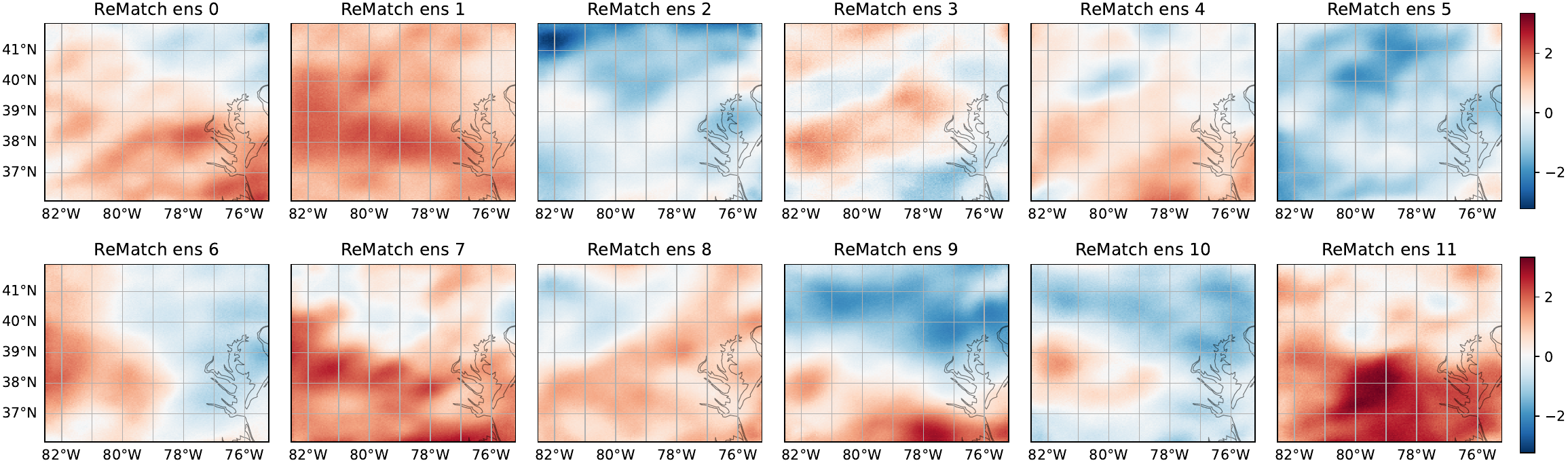}
        \caption{ReMatch\textsubscript{U} all residual ensemble samples.}
        \label{fig:rematch_residuals}
    \end{subfigure}
    \caption{Ensemble predictions from Fig.~\ref{fig:compare_all}.}
    \label{fig:ensembles}
\end{figure}
Consistent with prior observations~\cite{mardani2025residual, fotiadis2025adaptive} and our quantitative results, CorrDiff produces underdispersed predictions with a low SSR. This behavior is visible in Figure~\ref{fig:ensembles}: CorrDiff ensemble samples remain highly similar across members, as shown in Figure~\ref{fig:corrdiff_predictions}, and their residual magnitudes are relatively small, as shown in Figure~\ref{fig:corrdiff_residuals}, compared with ReMatch residuals in Figure~\ref{fig:rematch_residuals}. This suggests that CorrDiff fails to generate sufficiently corrective residuals under residual misspecification. Given the error-prone mean prediction shown by the UNet result in Figure~\ref{fig:compare_all}, the residual model should correct large-scale mean errors rather than merely add high-frequency details. However, CorrDiff largely fails to do so, whereas ReMatch produces more diverse and corrective residuals.

\subsection{ACC Score analysis}
\label{appendix:acc_score}
\begin{figure}[h]
\centering
\includegraphics[width=0.7\linewidth]{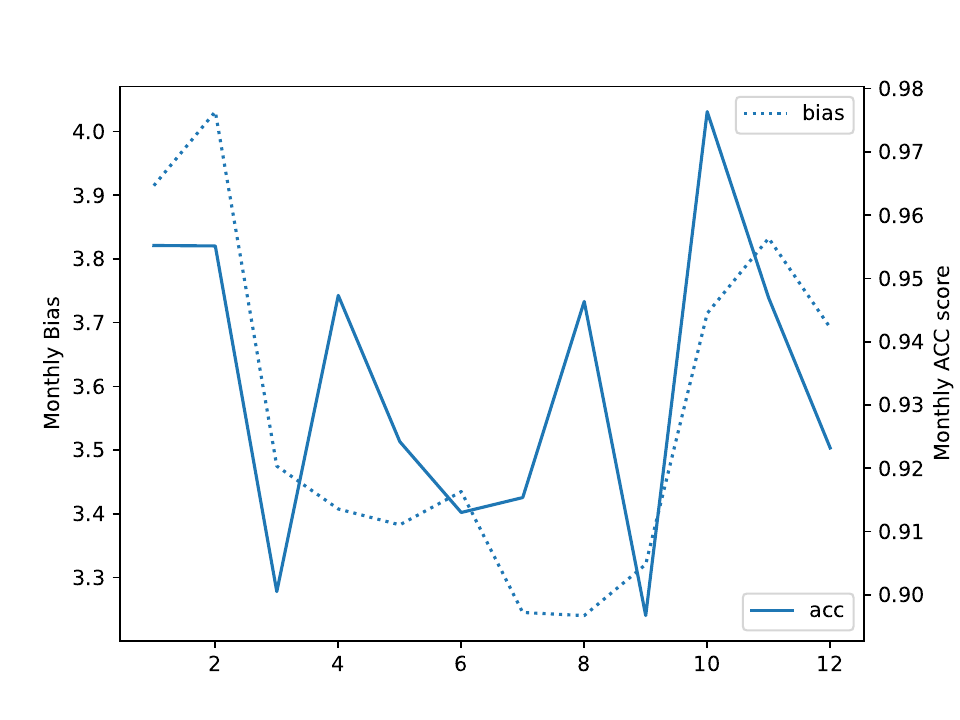}
\caption{\textbf{Comparison of ReMatch\textsubscript{U} ACC and input/output bias per month.} ACC score per month calculated for 2021 and average bias (RMSE) per month calculated over 2018-2021.}
\label{fig:accvrmse_permonth}
\end{figure}
With the ACC Score calculated across 2021, we can compare how spatially similar ReMatch\textsubscript{U} anomalies and HRRR anomalies are with the estimated climatological mean from 2018-2020 HRRR data. Anomalies are when the atmospheric variable deviates from the expected historical value at a certain time and location. We note that having a longer range of years to estimate the climatological mean would likely lead to more accurate estimates of anomalies. Fig.~\ref{fig:accvrmse_permonth} shows a monthly breakdown of downscaling bias and captures how our model adapts to the bias gap between ERA5 and HRRR. In winter months the bias is higher, but our model maintains high ACC scores, so ReMatch\textsubscript{U} is able to capture anomaly structure.

\subsection{Kinetic Energy Analysis}
\begin{figure}
    \centering
    \includegraphics[width=1.\linewidth]{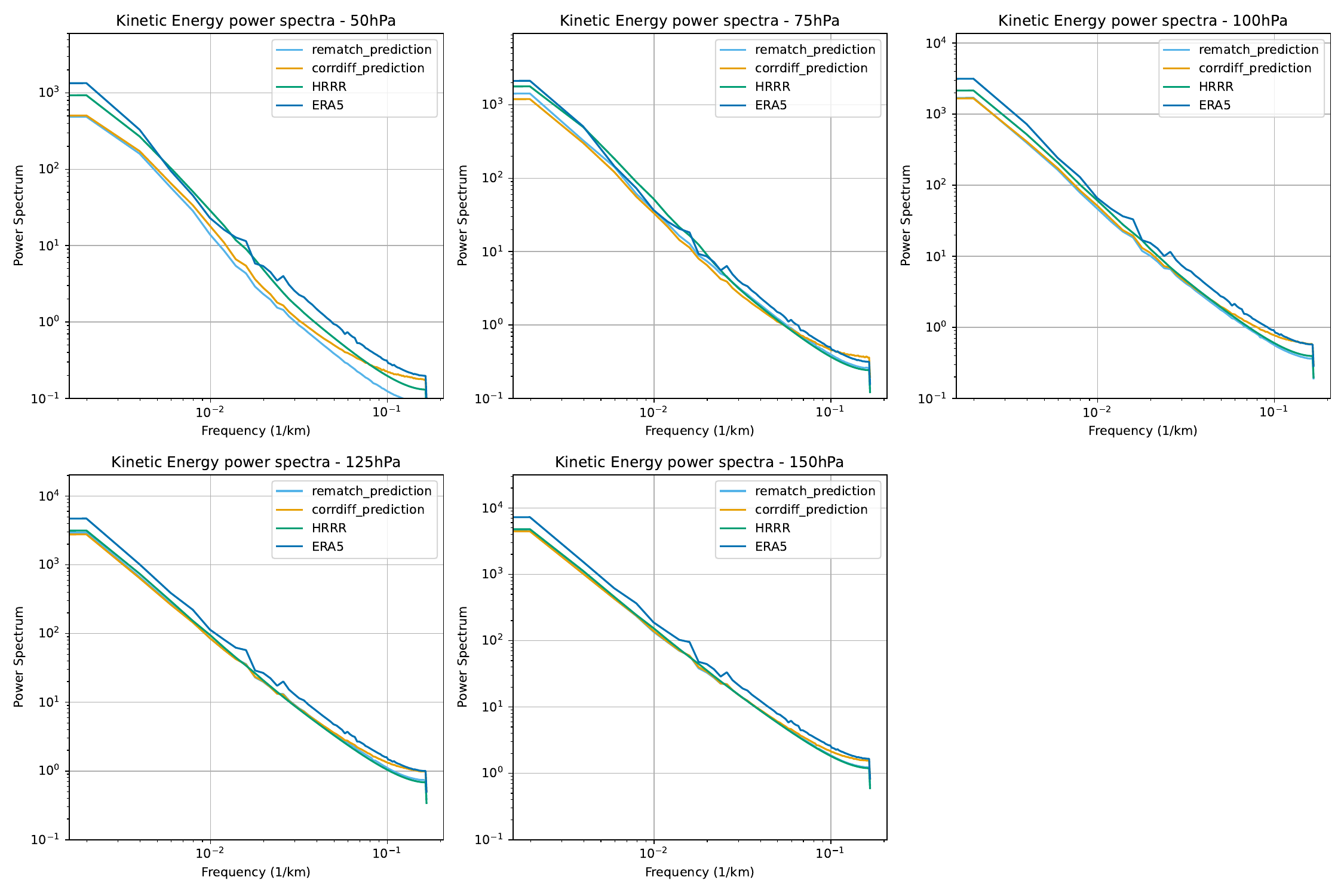}
    \caption{\textbf{Power spectra for kinetic energy in all pressure levels}}
    \label{fig:ke_spectrum}
\end{figure}
Figure~\ref{fig:ke_spectrum} compares the 1D kinetic energy spectra of the ground-truth HRRR field, the ERA5 input, CorrDiff predictions, and ReMatch predictions. This diagnostic measures how kinetic energy is distributed across spatial scales: low wavenumbers correspond to large-scale structure, whereas high wavenumbers correspond to finer-scale variability. A successful downscaling model should therefore not only reduce pointwise error, but also recover the target spectrum over the relevant range of wavenumbers.

To compute this quantity, we form the horizontal kinetic energy from the wind components and evaluate its one-dimensional Fourier spectrum. For each sample, we compute the Fourier coefficients of the horizontal wind components \(u\) and \(v\), and form
\[
E(k) \propto \frac{1}{2}\Big(|\hat u(k)|^2 + |\hat v(k)|^2\Big).
\]
We average over the remaining spatial dimension and over samples. For stochastic methods, the spectrum is computed separately for each ensemble member and then averaged across ensemble members.

The figure shows that ERA5 overestimates kinetic energy at all wavenumbers relative to HRRR, indicating a scale-dependent bias. ReMatch removes a substantially larger fraction of this excess power and tracks the HRRR spectrum more closely over a broad range of wavenumbers. The gap between CorrDiff and ReMatch is most visible at higher wavenumbers, where CorrDiff preserves more of the excessive energy inherited from ERA5, whereas ReMatch remains closer to the HRRR target. 
% The same trend is observed across all pressure levels; we omit the remaining spectra to avoid repetition.

This pattern differs from surface-level wind fields shown in \cite{mardani2025residual}, where HRRR often contains much stronger high-frequency variability due to land--sea contrast, topography, and other surface effects. At pressure levels, the flow is smoother, so even the HRRR target has relatively weak high-frequency energy. In this regime, the main issue is therefore not only missing fine-scale structure, but a broader mismatch in scale-dependent energy.

\subsection{Uncertainty Conditioned Experiments}
\label{appendix:uncertainty}
Previous work on image regression~\cite{angelopoulos2022image,kendall2017uncertainty} often learns pixelwise epistemic or aleatoric uncertainty jointly with the regression predictor. In our setting, we adapt this idea to the mean--residual CorrDiff architecture by training an auxiliary model to predict uncertainty about the upstream mean and using that quantity as an additional conditioning signal for residual diffusion. Empirically, however, dense pixelwise uncertainty maps did not learn meaningful or predictive structure, whereas low-dimensional scalar summaries of the regression error---such as RMSE, \(Q_5\), \(Q_{90}\), and \(Q_{95}\)---were much easier to predict and matched the ground-truth values more reliably on the test set (Fig.~\ref{fig:uncertainty_scalarvpixel}).
We explored two other spatially focused methods called Moran's I (a local indicator of spatial autocorrelation) and VGGT~\cite{wang2025vggt} inspired confidence maps. While the trained networks provided more meaningful feature maps~\ref{fig:uncertainty_badpixelwise}, they ultimately did not perform as well as conditioning to diffusion.

We interpret this through the lens of residual target misspecification. Under downscaling bias, the regression error \(Y-\mu_\phi(X)\) is itself a predictor-defined correction target whose distribution shifts across regimes. Learning a full pixelwise uncertainty field therefore amounts to inferring a high-dimensional summary of a misspecified residual target from limited observable information, which is substantially more difficult than predicting a coarse scalar descriptor of its magnitude. This explains why scalar uncertainty signals can still provide useful information to the diffusion model, as reflected in the UC baselines, while dense pixelwise uncertainty prediction remains unreliable. More broadly, these results suggest that uncertainty prediction alone is not sufficient to resolve residual misspecification: compressing the error into low-dimensional summaries may help, but recovering its full spatial structure requires a more direct treatment of the shifted residual target itself.
\begin{figure}[h]
    \centering
    \includegraphics[width=\linewidth]{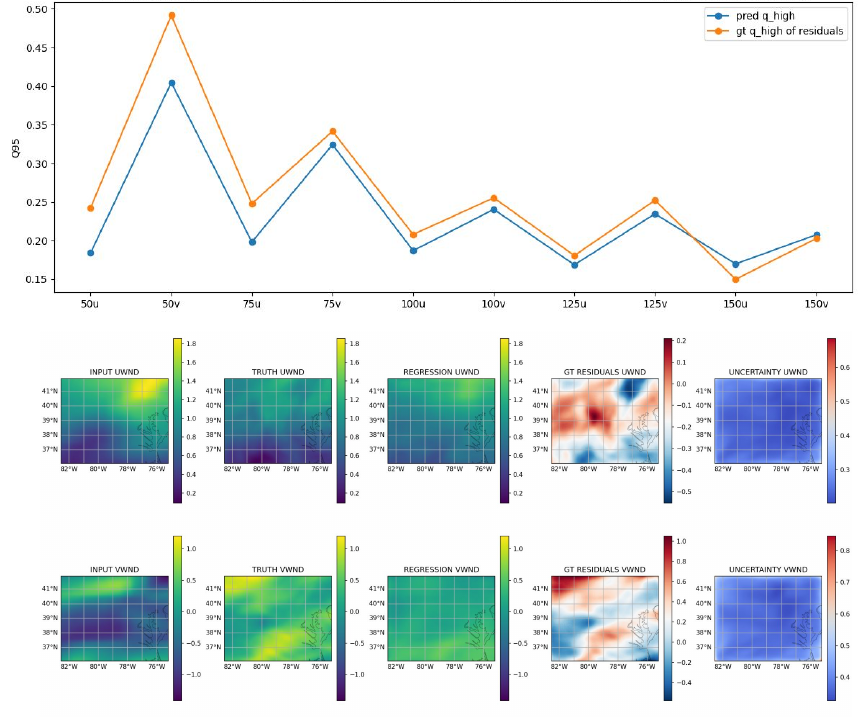}
        \caption{\textbf{Comparison of scalar and pixelwise uncertainty estimators.} Both networks predict Q5 and Q95 of the residuals per channel on the test set. The top plot is a scalar prediction while the bottom plot displays a pixelwise prediction at 2021-01-02 11:00:00 at 50 hPa $u$-wind field.}
        \label{fig:uncertainty_scalarvpixel}
\end{figure}

\begin{figure}[h]
    \centering
    \includegraphics[width=0.5\linewidth]{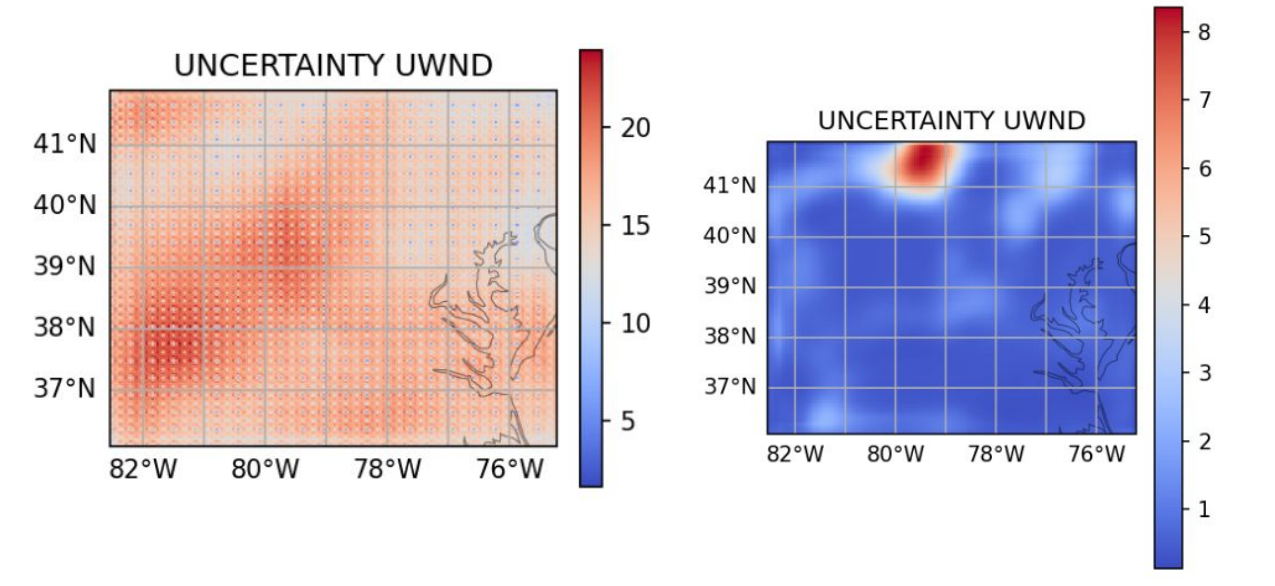}
        \caption{\textbf{Comparison of VGGT inspired (left) and Moran's I (right) pixelwise uncertainty networks.} Both samples are also from 2021-01-02 11:00:00 at 50 hPa $u$-wind field. The Moran's I uncertainty focuses more on "interest points" while the VGGT inspired confidence is more spatially aware. However, the VGGT inspired confidence has systematic artifacts.}
        \label{fig:uncertainty_badpixelwise}
\end{figure}
\newpage

%%%%%%%%%%%%%%%%%%%%%%%%%%%%%%%%%%%%%%%%%%%%%%%%%%%%%%%%%%%%

% \newpage
% \input{checklist.tex}

\end{document}